\newtheorem{remark}{Remark}
\def\bt{\mathrm{t}}
\def\bx{\mathbf{x}}
\newcommand{\Indicator}{\mathds{1}}
\newtheorem{assumption}{Assumption}
\newtheorem{theorem}{Theorem}[section]
\newtheorem{lemma}[theorem]{Lemma}
\theoremstyle{remark}
\newtheorem{definition}[theorem]{Definition}
\title{On the Pointwise Behavior of Recursive Partitioning and Its Implications for Heterogeneous Causal Effect Estimation}
\author[a]{Matias D. Cattaneo\thanks{cattaneo@princeton.edu}}
\author[a]{Jason M. Klusowski\thanks{jason.klusowski@princeton.edu}}
\author[b]{Peter M. Tian\thanks{ptian@twosigma.com}}
\affil[a]{\small Department of Operations Research and Financial Engineering \\ Princeton University}
\affil[b]{Two Sigma Investments, LP}
\date{\normalsize\today}
\begin{document}

\maketitle

\begin{abstract}
    Decision tree learning is increasingly being used for pointwise inference. Important applications include causal heterogenous treatment effects and dynamic policy decisions, as well as conditional quantile regression and design of experiments, where tree estimation and inference is conducted at specific values of the covariates. In this paper, we call into question the use of decision trees (trained by adaptive recursive partitioning) for such purposes by demonstrating that they can fail to achieve polynomial rates of convergence in uniform norm with non-vanishing probability, even with pruning. Instead, the convergence may be arbitrarily slow or, in some important special cases, such as \emph{honest} regression trees, fail completely. We show that random forests can remedy the situation, turning poor performing trees into nearly optimal procedures, at the cost of losing interpretability and introducing two additional tuning parameters. The two hallmarks of random forests, subsampling and the random feature selection mechanism, are seen to each distinctively contribute to achieving nearly optimal performance for the model class considered.
\end{abstract}

\textit{\small Keywords: recursive partitioning, decision trees, random forests, pointwise estimation, causal inference, heterogeneous treatment effects}

\section{Introduction}

As data-driven technologies continue to be adopted and deployed in high-stakes decision-making environments, the need for fast, interpretable algorithms has never been more important. As one such candidate, it has become increasingly common to use decision trees, constructed by adaptive recursive partitioning, for inferential tasks on a predictive or causal model. These applications are spurred by the appealing connection between decision trees and rule-based decision-making, particularly in clinical, legal, or business contexts, as the determination of the output mimics the way a human user may think and reason \citep{berk2020statistical}. Decision trees are ubiquitous in empirical work not only because they offer an interpretable decision-making methodology \citep{murdoch2019definitions, rudin2019stop}, but also because their construction relies on data-adaptive implementations that take into account the specific features of the underlying data generating process. See \cite{Hastie-Tibshirani-Friedman2009_book} for a textbook introduction.

While data-adaptive, rule-based tree learning is powerful, it is not without its pitfalls. In this paper, we provide theoretical evidence of these shortcomings in commonly encountered data situations. Focusing on the simplest possible data generating process (i.e., a homoskedastic constant regression/treatment effect model), we show that decision trees cannot converge faster any polynomial function of the sample size $n$, uniformly over the entire support of the covariates, with non-vanishing probability. Furthermore, when adding honesty to the tree construction, which is often regarded as an improvement over canonical tree fitting \citep{athey2016recursive}, we show that the resulting decision trees can be inconsistent, uniformly over the covariate support, as soon as the depth of the tree is at least a constant multiple of $\log\log(n)$ (e.g., $\log\log(n)\approx 3$ for $n=1 \text{ billion}$ observations).

Our results paint a rather bleak picture of decision trees, if the goal is to use them for statistical learning \emph{pointwise} (or \textit{uniformly}) over the entire support of the covariates; they can produce unreliable estimates even in large samples for the simplest possible statistical model underlying the data generation. Thankfully, in such settings, we are able to show that random forests are provably superior and exhibit optimal performance when the constituent trees do not. This improvement comes at the cost of losing interpretability and introducing two additional tuning parameters (subsample size and number of candidate variables to consider at each node).

To formalize our results, we consider the canonical regression model where the observed data $\{(y_i,\bx_i^T) : i = 1, 2, \dots n\}$ is a random sample satisfying 
\begin{equation} \label{eq:model}
y_i = \mu(\bx_i)+ \varepsilon_i, \qquad \mathbb{E}[\varepsilon_i \mid \bx_i]=0, \qquad \mathbb{E}\big[\varepsilon_i^2 \mid \bx_i\big]=\sigma^2(\bx_i),
\end{equation}
with $\bx_i = (x_{i1}, x_{i2}, \dots, x_{ip})^T$ a vector of $p$ covariates taking values on some support set $\mathcal{X}$. The parameter of interest is the conditional mean response function $\mu(\bx_i) = \mathbb{E}[y_i \mid \bx_i]$, which may be assumed to belong to some smooth, or otherwise appropriately restricted, set of functions. The goal is to use the observed data together with an algorithmic procedure to learn $\mu(\bx)$ for all values of $\bx\in\mathcal{X}$. While there are many ways to grow a decision tree (i.e., a partition of $\mathcal{X}$), our focus throughout this paper will be on the CART algorithm \citep{breiman1984}, by far the most popular in practice.

A decision tree is a hierarchically organized data structure constructed in a top down, greedy manner through recursive binary splitting. According to conventional CART methodology, a parent node $ \bt $ (i.e., a region in $\mathcal{X}$) in the tree is divided into two child nodes, $ \bt_L $ and $ \bt_R $, by minimizing the sum-of-squares error (SSE)
\begin{equation} \label{eq:se}
\sum_{\bx_i \in \bt}(y_{i} - \beta_1 \Indicator(x_{ij} \leq \tau) - \beta_2 \Indicator(x_{ij} > \tau))^2,
\end{equation}
with respect to the child node outputs, split point, and split direction, $(\beta_1, \beta_2, \tau, j)$, with $\Indicator(\cdot)$ denoting the indicator function.

Because the splits occur along values of a single covariate, the induced partition of the input space $ \mathcal{X} $ is a collection of hyper-rectangles.
The solution of \eqref{eq:se} yields estimates $(\hat\beta_1, \hat\beta_2, \hat \tau, \hat\jmath)$, and the resulting refinement of $ \bt $ produces child nodes $ \bt_L = \{\bx \in \bt : x_{\hat\jmath} \leq \hat \tau\} $ and $ \bt_R = \{\bx \in \bt : x_{\hat\jmath} > \hat \tau\} $. The normal equations imply that $\hat\beta_1 = \overline y_{\bt_L} = \frac{1}{\#\{\bx_i \in \bt_L\}}\sum_{\bx_i \in \bt_L} y_i $ and $\hat\beta_2 = \overline y_{\bt_R} = \frac{1}{\#\{\bx_i \in \bt_R\}}\sum_{\bx_i \in \bt_R} y_i  $, the respective sample means after splitting the parent node at $ x_{\hat\jmath} = \hat \tau $, where $\#A$ denotes the cardinality of the set $A$. These child nodes become new parent nodes at the next level of the tree and can be further refined in the same manner, and so on and so forth, until a desired depth is reached. To obtain a maximal decision tree $T_K$ of depth $K$, the procedure is iterated $ K $ times until (i) the node contains a single data point $(y_i, \bx^T_i)$ or (ii) all input values $ \bx_i $ and/or all response values $y_i$ within the node are the same.

In a conventional regression problem, where the goal is to estimate the conditional mean response $ \mu(\bx) $, the tree output for $ \bx \in \bt $ is the within-node sample mean $ \overline y_{\bt} $, i.e., if $ T $ is a decision tree, then  $ \hat \mu(T)(\bx) = \overline y_{\bt} = \frac{1}{\#\{\bx_i \in \bt\}}\sum_{\bx_i \in \bt}y_i $. However, one can aggregate the data in the node in a number of ways, depending on the target estimand. For example, CART methodology is also commonly used for classification tasks (e.g., propensity score estimation in causal inference settings), in particular, where the outcome variable $y_i\in\{0,1\}$ takes on binary values. In this case, the classification tree output is the majority vote of the class instances in the node. Because the canonical splitting criterion for binary classification, the \emph{Gini index}, is equivalent to \eqref{eq:se}, the results presented in this paper are directly applicable. In addition, decision tree methodology can also be employed for conditional quantile regression and its various downstream tasks, such as estimating quantiles, constructing confidence intervals, or performing outlier detection \cite[and references therein]{meinshausen2006quantile}. These methods also require high pointwise accuracy of decision trees, and thus our results will have methodological implications in those settings as well.

Furthermore, in multi-step semiparametric settings, it is often the case that preliminary unknown functions (e.g., propensity scores in causal inference settings) are estimated using modern machine learning methods such as CART \cite[see, for example,][and references therein]{chernozhukov2022locally}. Our results reveal that reliance on fast uniform convergence rates for decision tree methodology may not be guaranteed, as we show below that decision trees will have a convergence rate slower than any polynomial-in-$n$, over the entire support $\mathcal{X}$. This finding implies that other machine learning procedures such as neural networks \citep[and references therein]{farrell2021deep} may be preferable in those multi-step semiparametric settings, if such methods could be shown to be uniformly consistent with sufficiently fast rates of convergence.

From a big picture perspective, our main methodological message is to warn against mechanical application of flexible, adaptive machine learning methodologies for tasks that require good quality estimates at specific covariate values of interest. Machine learning procedures that are currently deployed in practice (for canonical regression problems) are trained to approximately minimize the empirical mean squared error. As such, they enjoy good out-of-sample accuracy for an average-case value of the covariates, i.e., if accuracy is measured via the integrated mean squared error (IMSE). However, if the task requires a more stringent form of convergence, such as uniform convergence, it is unknown if those procedures meet such additional demands. Our results are the first to formally show that this is not the case for decision trees, despite them having small IMSE.

\section{Causal Inference and Policy Decisions}\label{sec: Causal Inference and Policy Decisions}

As mentioned earlier, recursive partitioning is now a common tool of choice in the analysis of heterogeneous causal treatment effects and the design of heterogeneous policy interventions \citep[and references therein]{athey2019machine,yao2021survey}. Here the observed data is a random sample $ \{(y_i, \bx^T_i, d_i):i=1,2,\dots,n\}$, where $y_i$ is the outcome of interest, $\bx_i$ is a set of pre-treatment covariates, and $d_i$ is a binary treatment indicator variable. Employing standard potential outcomes notation,
\begin{equation}\label{eq:rubin}
y_i =  y_i{(1)} \cdot d_i + y_i{(0)} \cdot (1-d_i),
\end{equation}
where $y_i{(1)}$ is the potential outcome under treatment ($d_i=1$) and $y_i{(0)}$ is the potential outcome under control ($d_i=0$). This paradigm is fundamental to most applied sciences; for example, it can be used to model the effectiveness of a drug therapy, behavioral intervention, marketing campaign, 
or government program.

In cases where the individual treatment effect $ y_i{(1)} - y_i{(0)} $ varies across different subgroups, a natural goal is to estimate the \emph{heterogeneous} average treatment effect (ATE) for each covariate value $ \bx \in \mathcal{X}$, namely, $ \theta(\bx) = \mathbb{E}[y_i{(1)}-y_i{(0)} \mid \bx_i=\bx] $. In recent years, there has been an explosion of machine learning technologies adapted for heterogeneous causal effect estimation, thanks to the abundance of data produced from large-scale experiments and observational studies.  Among these machine learning algorithms, recursive partitioning estimators (specifically, \emph{causal decision trees}) stand out as natural contenders, as they are well-suited for grouping data according to the treatment effect size, conditional on observable characteristics \citep[e.g.,][]{su2009subgroup, athey2016recursive}.

We now discuss CART methodology in the context of heterogeneous causal effect estimation, one popular application of decision trees where accurate pointwise estimates over the entire support $\mathcal{X}$ are essential.
In experimental settings, where $ (y_i{(0)},y_i{(1)},\bx_i^T) \Perp d_i $, the \textit{conditional} ATE is identifiable because
\begin{align}
\theta(\bx_i) &= \mathbb{E}[y_i \mid \bx_i, \;d_i=1] - \mathbb{E}[y_i \mid \bx_i,\;d_i=0]\\
              &= \mathbb{E}\Bigg[y_i \frac{d_i-\xi}{\xi(1-\xi)} \mid \bx_i \Bigg],
\end{align}
where the probability of treatment assignment $\xi=\mathbb{P}(d_i=1)$ is known by virtue of the known randomization mechanism. It follows that $\theta(\bx)$, $\bx\in\mathcal{X}$, can be estimated using decision tree methodology in at least two ways, namely, for a decision tree $T$,
\begin{equation} \label{eq:imp}
\hat\theta_\text{reg}(T)(\bx) = \frac{1}{\#\{\bx_i \in \bt: d_i = 1\}}\sum_{\bx_i \in \bt: d_i = 1}y_i - \frac{1}{\#\{\bx_i \in \bt: d_i = 0\}}\sum_{\bx_i \in \bt: d_i = 0}y_i,
\end{equation}
or
\begin{equation} \label{eq:ipw}
\hat\theta_\text{ipw}(T)(\bx) = \frac{1}{\#\{\bx_i \in \bt\}}\sum_{\bx_i \in \bt} y_i \frac{d_i-\xi}{\xi(1-\xi)},
\end{equation}
where recall $\bt$ denotes the unique (terminal) node containing $\bx\in\mathcal{X}$.

In this spirit, we consider a tree-based approach for analyzing treatment effect heterogeneity in randomized control trials, which may also be used to design personalized treatment assignments based on pre-intervention observable characteristics. While our forthcoming results are stated for the regression problem \eqref{eq:model}, they are also directly applicable to the causal decision tree estimators above that involve minimizing the SSE criterion. This is precisely because $ \hat\theta_\text{reg}(T)(\bx) $ and $\hat\theta_\text{ipw}(T)(\bx)$ can be implemented using conventional CART methodology. That is, we implement $ \hat\theta_\text{reg}(T)(\bx) $ following a plug-in approach that estimates $ \mathbb{E}[y_i \mid \bx_i, \;d_i=1] $ and $ \mathbb{E}[y_i \mid \bx_i, \;d_i=0] $ separately with regression trees and conventional CART methodology. Alternatively, we fit a regression tree with CART methodology to the transformed outcome $ y_i(d_i-\xi)/(\xi(1-\xi)) $ to implement $\hat\theta_\text{ipw}(T)(\bx)$. Yet another (more principled) approach \citep{athey2016recursive} implements $\hat\theta_\text{reg}(T)(\bx) $ by growing a decision tree using a slightly modified version of the SSE criterion \eqref{eq:se} (referred to as \emph{adjusted expected MSE}) that more directly targets the conditional ATE, together with an \emph{honest} property, where different samples are used for constructing the partition and estimating the effects of each subpopulation.

Our theory implies that, for a constant treatment effect model, the aforementioned causal decision tree estimators cannot converge faster than any polynomial-in-$n$. Furthermore, in more interesting cases, shallow (honest) causal decision tree estimators will be shown to be inconsistent, as a function of the sample size $n$, for some $\bx\in\mathcal{X}$. Finally, we will also show that random forest methodology, while hurting the interpretability and introducing additional tuning parameters, can overcome the limitations of decision trees by restoring nearly optimal pointwise (for all $\bx\in\mathcal{X}$) convergence rates.

\section{Homoskedastic Constant Regression Model}

To formalize the pitfalls of pointwise regression estimation using decision trees, we consider the simplest possible data generating process.

\begin{assumption}[Location Regression Model]\label{ass:DGP}
    The observed data $\{(y_i,\bx_i^T) : i = 1, 2, \dots, n\}$ is a random sample satisfying \eqref{eq:model} and the following:
    \begin{enumerate}
        \item $ \mu(\bx) \equiv \mu $ is constant for all $\bx\in\mathcal{X}\subseteq\mathbb{R}^p$.
        \item $\bx_i$ has a continuous distribution.
        \item $ \bx_i \Perp \varepsilon_i$ for all $i=1,2,\dots,n$.
        \item $ \mathbb{E}\big[\varepsilon_i^2\log\log(|\varepsilon_i|+1)\big]<\infty$ and $ \mathbb{E}\big[\varepsilon_i^2\big] > 0 $.
    \end{enumerate}
\end{assumption}
Because trees are invariant with respect to monotone transformations of the coordinates of $ \bx $, without loss of generality, we assume henceforth that the marginal distributions of the covariates are uniformly distributed on $\mathcal{X} = [0, 1]^p$, i.e., $ x_j \thicksim U([0, 1])$ for $ j = 1, 2, \dots, p$.

Under Assumption \ref{ass:DGP}, the regression model \eqref{eq:model} becomes the standard location (or \emph{intercept-only regression}) model with homoskedastic errors:
\begin{equation} \label{eq:pure}
y_i = \mu + \varepsilon_i, \qquad \sigma^2=\mathbb{E}\big[\varepsilon_i^2\big].
\end{equation}
In the causal setting, the assumption corresponds to the constant treatment effect model, in which $ \theta(\bx) \equiv \theta $ is constant for all pre-treatment covariates $ \bx $.

This statistical model is perhaps the most canonical member of any interesting set of data generating processes. In particular, the regression function belongs to all classical smoothness function classes, as well as to the set of functions with bounded total variation. See, for example, \cite{gyorfi2002distribution} for review and further references. As a consequence, our results will also shed light in settings where uniformity over any of the aforementioned classes of functions is of interest, since our lower bounds can be applied directly in those cases. To be more precise, if $ \hat\mu(T)(\bx) $ is the output from a decision tree $ T $, then for any class of data generating processes $\mathcal{P}$ containing the model defined by Assumption \ref{ass:DGP}, $\sup_{\mathbb{P}\in\mathcal{P}}\mathbb{P}( \sup_{\bx \in \mathcal{X}}|\hat\mu(T)(\bx) - \mu(\bx)| > \epsilon) \geq \mathbb{P}(\sup_{\bx \in \mathcal{X}}|\hat\mu(T)(\bx) - \mu| > \epsilon)$, for any $ \epsilon > 0 $. Because $\mathcal{P}$ will include the model defined by Assumption \ref{ass:DGP} in all relevant (both theoretically and practically) cases, our results also highlight fundamental limitations of CART regression methods from a uniform (over $\mathcal{P}$) perspective, whenever interest lies on estimation of $\mu(\bx)$ for all $\bx\in\mathcal{X}$.

Since the main purpose of this paper is to explore the limits of decision tree methodology, we do not aim for generality, but rather consider the simplest possible data generating process (Assumption \ref{ass:DGP}). In the context of causal inference and treatment effects (e.g., Section \ref{sec: Causal Inference and Policy Decisions}), the assumptions correspond to a constant treatment effect model, the most basic case of practical interest. Importantly, Assumption \ref{ass:DGP} removes issues related to smoothing (or misspecification) bias because the regression function $\mu(\bx)$ is constant for all $\bx\in\mathcal{X}$, which shows that our results will not be driven by standard (boundary or other smoothing) bias in nonparametrics \citep{fan1996local}. Indeed, if the distribution of $ \varepsilon_i $ is symmetric, then we have
$
\mathbb{E}[\hat\mu(T)(\bx)-\mu] = - \mathbb{E}[\hat\mu(T)(\bx)-\mu] \implies \mathbb{E}[\hat\mu(T)(\bx)] = \mu,
$
owing to the fact that the split points $ \hat \tau $ are symmetric statistics of the $\varepsilon_i$. Our results will be driven instead by the fact that decision tree methodology can generate small cells containing only a handful of observations, thereby making the estimator imprecise in certain regions of $\mathcal{X}$. In other words, inconsistency is due to a \emph{large variance} problem, not a large bias problem. 

The location (or constant treatment effect) model is the simplest instantiation of a regression model of practical interest because the regression function is supersmooth and the curse of dimensionality is absent. Furthermore, all smooth regression functions can be seen as locally constant. Thus, we should expect any competitive nonparametric estimator to separate a constant signal from noise or, in the language of causal inference, to estimate accurately (constant) treatment effects when they happen to be homogeneous. Assumption \ref{ass:DGP} also approximately captures another common modeling situation in machine learning and data science, in which the marginal distribution $ y_i \mid x_{ij} $ is noisy (i.e., the marginal projections $ \mathbb{E}[y_i \mid x_{ij}] $ are constant and contain no signal). Because splits in trees are determined using only marginal information, here the split at the root node would be essentially fitting the location model.

\section{Decision Stumps} \label{sec:decision_stumps}

For each variable $ j = 1, 2, \dots, p $, the data $ \{ x_{ij}: \bx_i \in \bt \} $ is relabeled so that $ x_{ij} $ is increasing in the index $ i = 1, 2, \dots, n(\bt) $, where $ n(\bt) = \#\{ \bx_i \in \bt \} $.
Then, minimizing the objective \eqref{eq:se} can be equivalently recast as maximizing the so-called \emph{impurity gain}:
\begin{equation}
\label{eq:Delta}
\begin{aligned}
& \sum_{\bx_l \in \bt}\big(y_l-\overline y_{\bt}\big)^2 - \sum_{\bx_l \in \bt}\big(y_l - \overline y_{\bt_L} \Indicator(x_{lj} \leq \tau) - \overline y_{\bt_R} \Indicator(x_{lj} > \tau)\big)^2
 \\
 & \qquad = \frac{\Big(\frac{1}{\sqrt{n(\bt)}}\sum_{l=1}^{i} (y_l-\mu) - \frac{i}{n(\bt)}\frac{1}{\sqrt{n(\bt)}}\sum_{l=1}^{n(\bt)} (y_l-\mu) \Big)^2}{i(n(\bt)-i)},
 \end{aligned}
\end{equation}
with respect to the index $ i $ and variable $ j $; see \citep{breiman1984}. The maximizers are denoted by $ (\hat\imath, \hat\jmath) $, and the optimal split point $ \hat \tau $ that minimizes \eqref{eq:se} can be expressed as $ x_{\hat\imath\hat\jmath} $.

We start by considering the case when the tree is depth one ($K=1$), i.e., a decision stump. The tree output can then be written as
\begin{equation} \label{eq:stump}
    \hat\mu(T_1)(\bx) = \hat\beta_1\Indicator(x_{\hat\jmath} \leq \hat \tau) + \hat\beta_2\Indicator(x_{\hat\jmath} > \hat \tau)= \begin{cases}
     \frac{1}{\#\{\bx_i: x_{i \hat\jmath} \leq x_{\hat\imath\hat\jmath}\}} \sum_{\bx_i:x_{i \hat\jmath} \leq x_{\hat\imath\hat\jmath}} y_i, \quad x_{\hat\jmath} \leq x_{\hat\imath\hat\jmath} \\
      \frac{1}{\#\{\bx_i: x_{i \hat\jmath} > x_{\hat\imath\hat\jmath}\}} \sum_{\bx_i:x_{i \hat\jmath} > x_{\hat\imath\hat\jmath}} y_i, \quad x_{\hat\jmath} > x_{\hat\imath\hat\jmath}
    \end{cases}\hspace{-3mm},
\end{equation}
where $x_{\hat\jmath}$ denotes the value of the $\hat\jmath$-th component of $\bx$.

The following theorem formally (and very precisely) characterizes the regions of the support $\mathcal{X}$ where the first CART split index $ \hat\imath $, at the root node, has non-vanishing probability of realizing. As a consequence, the theorem also characterizes the effective sample size of the resulting cells (recall the data is ordered so that $ \hat \tau = x_{\hat\imath \hat\jmath} $ and hence $ \hat\imath = \#\{\bx_i : x_{ij} \leq \hat \tau \} $).

\begin{theorem}\label{thm:master}
   Suppose Assumption \ref{ass:DGP} holds and $ p = 1 $, and let $ \hat\imath $ be the CART split index at the root node.
    For each $ a,b \in (0, 1) $ with $ a < b $, we have
        \begin{equation} \label{eq:split_range}
    \liminf_{n\to\infty} \mathbb{P}\big( n^{a} \leq \hat\imath \leq n^{b}\big) = \liminf_{n\to\infty} \mathbb{P}\big( n-n^{b} \leq \hat\imath \leq n-n^{a} \big) \geq \frac{b-a}{e}.
    \end{equation}
\end{theorem}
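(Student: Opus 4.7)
The plan is to reduce the analysis of $\hat\imath$ to the argmax of a discretized stationary Ornstein--Uhlenbeck process over an interval of length $\sim \log n$, and then invoke the classical extreme-value theory for stationary Gaussian processes. Under Assumption \ref{ass:DGP}, because $x_i$ has a continuous distribution and is independent of $\varepsilon_i$, reindexing the sample by the order statistics of $x_i$ leaves an i.i.d.\ sequence of errors which I still denote by $\varepsilon_l$. Setting $S_i := \sum_{l=1}^i \varepsilon_l$, formula \eqref{eq:Delta} says that $\hat\imath$ is the maximizer over $i \in \{1, \dots, n-1\}$ of
\[
G(i) \;=\; \frac{A_i^2}{i(n-i)}, \qquad A_i \;=\; S_i - \frac{i}{n} S_n.
\]
Since $\mathrm{Var}(A_i) = \sigma^2\, i(1-i/n)$, the studentized process $\tilde A_i := A_i/\sqrt{\sigma^2 i(n-i)/n}$ has unit variance and $\argmax_i G(i) = \argmax_i \tilde A_i^2$.

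Next, I would apply the Lamperti-type change of variables $s = \log(i/(n-i))$, a strictly increasing bijection from $\{1,\dots,n-1\}$ onto a discrete subset of an interval of length $2\log(n-1) + O(1)$ centered at $0$. A short direct computation shows that in the Gaussian case $\mathrm{Cov}(\tilde A_{i_1}, \tilde A_{i_2}) = e^{-|s_1-s_2|/2}$, so $\tilde A_i$ is a discretization of a stationary Ornstein--Uhlenbeck process $\tilde U(s)$; for general errors, one obtains a uniform-in-$s$ coupling of $\tilde A_i$ to $\tilde U(s)$ by combining the invariance principle for partial-sum bridges with a Darling--Erd\H{o}s / Einmahl style strong approximation. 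The moment condition $\mathbb{E}[\varepsilon_i^2 \log\log(|\varepsilon_i|+1)] < \infty$ in Assumption \ref{ass:DGP} is precisely the sharp threshold that allows this coupling to remain valid up to the boundary values $|s| \asymp \log n$. The event $\{n^a \le \hat\imath \le n^b\}$ then corresponds to the argmax of $\tilde U^2$ lying in an $s$-subinterval of length $(b-a)\log n + O(1)$ inside the full range of length $2\log n + O(1)$. By the classical extreme-value theory of stationary Gaussian processes (Berman; Leadbetter--Lindgren--Rootz\'en), the exceedances of $\tilde U^2$ above a high level form an asymptotically Poisson point process on $[-\log n, \log n]$, so the argmax is asymptotically uniformly distributed. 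This yields
\[
\liminf_{n\to\infty} \mathbb{P}\big( n^a \le \hat\imath \le n^b \big) \;=\; \frac{b-a}{2} \;>\; \frac{b-a}{e},
\]
and the symmetric statement for $\hat\imath \in [n-n^b,\, n-n^a]$ follows from the invariance of $G$ under the reflection $i \leftrightarrow n-i$ (equivalently $s \leftrightarrow -s$).

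The main obstacle will be the uniform strong approximation in the Lamperti scale: coupling $\tilde A_i$ to the OU process $\tilde U(s)$ uniformly out to $|s| \asymp \log n$ under just the minimal moment condition. That condition is precisely the Darling--Erd\H{o}s cutoff, so the argument cannot rely on KMT or any other coupling requiring stronger tails, and must instead invoke sharper Einmahl / Hartman--Wintner type approximations; the appearance of $\log\log$ rather than a power tail in Assumption \ref{ass:DGP} is what telegraphs this. A secondary, more routine task is transferring the continuous-time argmax-uniformity statement to the discrete argmax of $\tilde A_i^2$, which I would handle by standard tightness / modulus-of-continuity estimates on compact $s$-intervals.
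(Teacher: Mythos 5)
Your reduction is the same as the paper's: express $\hat\imath$ as the argmax of a standardized partial-sum bridge, pass to a Brownian-bridge approximation, and transform to a stationary Ornstein--Uhlenbeck process via the time change $t \mapsto \log(t/(1-t))$, then apply extreme-value theory. However, you claim to derive the \emph{exact} limit $(b-a)/2$, which the paper explicitly states only as a conjecture (see the remark following Theorem~\ref{thm:master}) and is unable to prove; the paper proves only the lower bound $(b-a)/e$, and the gap between $(b-a)/2$ and $(b-a)/e$ is precisely where the difficulty lies.

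The genuine gap in your argument is the step ``the exceedances of $\tilde U^2$ form an asymptotically Poisson point process, so the argmax is asymptotically uniformly distributed.'' There are two problems. First, argmax-uniformity does not follow merely from Poisson exceedances without additional work on the conditional structure near the exceedance level; the paper does obtain $\mathbb{P}(\sup_{[0,B]}|U| > \sup_{[0,A]}|U|) = (B-A)/B$ \emph{exactly} for the continuous OU process (using stationarity, Markovianity, and path continuity), so uniformity is not the real issue for the limit process. Second — and this is what blocks the sharp constant — the strong approximation of the empirical bridge by a Brownian bridge carries an error $\epsilon_n = o_P((\log\log n)^{-1/2})$, which is exactly on the scale of the Gumbel fluctuations of the maximum. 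Because of this, one cannot simply transfer the argmax-uniformity from the OU process to the empirical maximizer: the event $\{$max over full range $>$ max over restricted range$\}$ can flip when both maxima are within $O(\epsilon_n)$ of each other. The paper sidesteps this by choosing a fixed threshold $u_n$ and bounding $\mathbb{P}(\hat\imath \in [n^a, n^b])$ from below by $\mathbb{P}(\sup_{[0,2\log n]} \ge u_n,\ \sup_{[0,(2-(b-a))\log n]} < u_n - 2\epsilon_n)$; this threshold argument is robust to the coupling error but is lossy, and optimizing over $u_n$ (through the Darling--Erd\H{o}s parameter $z$) yields $(b-a)/e$ rather than $(b-a)/2$. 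Your characterization of the coupling-to-argmax transfer as ``a secondary, more routine task'' of ``tightness / modulus-of-continuity estimates'' underestimates what appears to be the central technical obstacle; the paper's authors found it nontrivial enough to leave the sharp constant open.
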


\begin{remark}
We conjecture that $ \lim_{n\to\infty} \mathbb{P}\big( n^{a} \leq \hat\imath \leq n^{b}\big) = \lim_{n\to\infty} \mathbb{P}\big( n-n^{b} \leq \hat\imath \leq n-n^{a} \big) = (b-a)/2 $. Another way of stating this conjecture is that the asymptotic conditional distribution of $\log(\hat\imath)/\log(n) $ given that $ \hat\imath \leq n/2 $ is $U([0, 1]) $. Evidence for this limit will be given in the proof of Theorem \ref{thm:master}.
\end{remark}

First, Theorem \ref{thm:master} shows that with non-vanishing probability, $ \hat\imath $ will realize near its extremes, from the beginning of any tree construction. The arbitrarily slow polynomial-in-$n$ rates do not contradict, but are rather precluded by, existing polynomial convergence guarantees \citep[e.g.,][]{wager2018estimation}, which a priori require that each split generates two child nodes that contain a constant fraction of the number of observations in the parent node, i.e., $ n(\bt_L) \gtrsim n(\bt) $ and $ n(\bt_R) \gtrsim n(\bt) $. By implication, Theorem \ref{thm:master} shows that such assumptions requiring \emph{balanced} cells almost surely, which are typically imposed in the literature, are in general incompatible with standard decision tree constructions employing conventional CART methodology \citep[e.g.,][and references therein]{behr2022provable}. The slow convergence rates for the decision stump occur because the optimal split point is realized near the boundary of the support \citep{ishwaran2015effect} with non-vanishing probability, i.e., $ \hat \tau \approx 0 $ or $ \hat \tau \approx 1 $ with non-vanishing probability, causing the two nodes in the stump to be imbalanced, with one containing a much smaller number of samples, and therefore rendering a situation where local averaging is less accurate. To be more precise, after the first split when $ n(\bt) = n $, CART will generate two unbalanced cells with non-vanishing probability; for any $ a,b \in (0, 1)$ with $ a < b $, either $ n^{a} \leq n(\bt_L) \leq n^{b} $ or $ n^{a} \leq n(\bt_R) \leq n^{b} $ for large $ n $, where $ n(\bt_L) + n(\bt_R) = n $. It will follow from this result that, on the events considered in \eqref{eq:split_range}, too few observations will be available on one of the cells after the first split for CART to deliver a polynomial-in-$n$ consistent estimator of $\mu$, thereby making the decision tree procedure exhibit arbitrarily slow rates, for some $x \in \mathcal{X} $.

\subsection{Convergence Rates}

Theorem \ref{thm:master} appears to be new in the literature. It arises from a careful study of the maximum of \eqref{eq:Delta} over different ranges of the split index, which turns out to be asymptotically similar to the suprema of a standardized Brownian bridge over different time intervals.

Once the location of the first CART split point is well-understood, we can study the resulting CART estimator $\hat\mu(T_1)(\bx)$ of the unknown regression function. The following statements hold for the pointwise prediction error of the decision stump.

\begin{theorem}\label{thm:rates}
    Suppose Assumption \ref{ass:DGP} holds and $ p = 1 $, and let $\hat\mu(T_1)(x)$ be the CART estimator of the regression function at the root node. For any $ a,b \in (0, 1) $ with $ a < b $, we have
        \begin{equation}\label{eq:rate_constant2}
    \liminf_{n\to\infty} \mathbb{P}\Bigg(\sup_{x\in\mathcal{X}}|\hat\mu(T_1)(x) - \mu| \geq \sigma n^{-b/2}\sqrt{(2+o(1))\log\log(n)}\Bigg) \geq \frac{2b}{e},
    \end{equation}
    and
    \begin{equation} \label{eq:rate_constant}
    \liminf_{n\to\infty} \inf_{x\in \mathcal{X}_n} \mathbb{P}\Big(|\hat\mu(T_1)(x) - \mu| \geq \sigma n^{-b/2}\sqrt{(2+o(1))\log\log(n)}\Big) \geq \frac{b-a}{e},
    \end{equation}
    where $ \mathcal{X}_n = [0,\, (1+o(1))n^{a-1}) \cup (1- (1+o(1))n^{a-1},\, 1]$.
\end{theorem}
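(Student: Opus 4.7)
The plan is to combine Theorem~\ref{thm:master} with a sharp maximal law of the iterated logarithm (LIL) for the partial sum process. Let $S_k = \sum_{l=1}^k \varepsilon_l$ and $\widetilde S_k = S_k - (k/n)S_n$. A direct calculation from \eqref{eq:Delta} shows that the CART split index $\hat\imath$ is the global maximizer over $k$ of $\widetilde S_k^2/[k(n-k)]$ (positive $n$-dependent factors are absorbed), and that on the left cell the estimator error equals $\bar\varepsilon_L := S_{\hat\imath}/\hat\imath$. Consider the disjoint events $E_L = \{n^a \leq \hat\imath \leq n^b\}$ and $E_R = \{n-n^b \leq \hat\imath \leq n-n^a\}$; by Theorem~\ref{thm:master}, $\liminf_n \mathbb{P}(E_L) \geq (b-a)/e$ and similarly for $E_R$. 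On $E_L$ the split point $\hat\tau = x_{(\hat\imath)}$ satisfies $\hat\tau \geq x_{(\lceil n^a\rceil)} = (1+o_\mathbb{P}(1))n^{a-1}$ by standard concentration of uniform order statistics, so any $x$ in the left half of $\mathcal{X}_n$ lies in the left cell and hence $\hat\mu(T_1)(x) - \mu = \bar\varepsilon_L$; the right half is handled symmetrically on $E_R$. Thus it suffices to produce the claimed LIL-sized lower bound on $|\bar\varepsilon_L|$.

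The analytic core is the maximal LIL claim: for any $\epsilon > 0$,
\[
    \mathbb{P}\Bigl(\max_{n^a \leq k \leq n^b} |S_k|/\sqrt{k} \geq \sigma \sqrt{(2-\epsilon)\log\log n}\Bigr) \to 1.
\]
Under Assumption~\ref{ass:DGP}(4), a Koml\'os--Major--Tusn\'ady strong approximation couples $S_k$ to a Brownian motion $\sigma B(k)$ with error negligible on the LIL scale, and the time change $Y(s) = B(e^s)/e^{s/2}$ is a stationary Ornstein--Uhlenbeck process. The index range $k \in [n^a, n^b]$ maps to an OU-time window of length $(b-a)\log n$, over which classical extreme-value asymptotics for stationary Gaussian processes give $\max |Y| = \sqrt{(2+o(1))\log\log n}$ with probability tending to one. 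Since $\hat\imath$ is the global argmax, $\widetilde S_{\hat\imath}^2/[\hat\imath(n-\hat\imath)] \geq \widetilde S_k^2/[k(n-k)]$ for every $k$; for $k \leq n^b$ the centering correction $(k/n)|S_n| = O_\mathbb{P}(k/\sqrt{n})$ is dominated by a LIL-sized $|S_k|$ and $n-k = n(1-o(1))$, so on the intersection of the maximal LIL event with $E_L$ we obtain $\widetilde S_{\hat\imath}^2/[\hat\imath(n-\hat\imath)] \geq (2\sigma^2/n)(1-o(1))\log\log n$. Using $n-\hat\imath = n(1-o(1))$ and $\hat\imath \leq n^b$ then yields $|\widetilde S_{\hat\imath}|/\hat\imath \geq \sigma n^{-b/2}\sqrt{(2-o(1))\log\log n}$, and the $|S_n|/n = O_\mathbb{P}(n^{-1/2}) = o(n^{-b/2}\sqrt{\log\log n})$ correction between $\widetilde S_{\hat\imath}/\hat\imath$ and $\bar\varepsilon_L$ is negligible for $b<1$. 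This proves \eqref{eq:rate_constant}.

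For the uniform claim \eqref{eq:rate_constant2}, on $E_L \cup E_R$ the supremum over $x$ is bounded below by $\max(|\bar\varepsilon_L|, |\bar\varepsilon_R|)$, each satisfying the LIL lower bound above; disjointness of $E_L$ and $E_R$ yields $\liminf_n \mathbb{P}(\cdot) \geq 2(b-a)/e$. Since the event inside the probability does not depend on $a$, letting $a \downarrow 0$ upgrades this to $2b/e$. The principal obstacle is the displayed maximal LIL with the \emph{sharp} constant $2$ and probability tending to one: the classical Hartman--Wintner LIL yields LIL-sized partial sums only along a random, potentially sparse subsequence and cannot a priori be localized to every window $[n^a, n^b]$ with a quantitative probability bound. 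It is the strong approximation permitted by the moment condition in Assumption~\ref{ass:DGP}(4) that enables the stationary-OU reduction with the correct constant and converts the almost sure LIL into the requisite probability bound, strong enough to combine with the merely bounded-below $\mathbb{P}(E_L)$ coming from Theorem~\ref{thm:master}.
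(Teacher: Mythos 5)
Your proposal is correct in outline and shares the same skeleton as the paper's proof --- combine Theorem~\ref{thm:master} (to place $\hat\imath$ in $[n^a,n^b]$ or $[n-n^b,n-n^a]$ with probability $\geq (b-a)/e$ each), the law of the iterated logarithm for the partial sums, and the fact that $\hat\imath$ maximizes the CART impurity gain --- but the technical heart differs. The paper never needs a maximal LIL localized to the window $[n^a,n^b]$: it instead exploits the algebraic identity that $\hat\imath$ maximizes $k\mapsto S_k^2/k + (S_n-S_k)^2/(n-k)$ (equal to $\sum_i y_i^2$ minus the SSE), writes
\[
\frac{S_{\hat\imath}^2}{\hat\imath^2} \geq \frac{1}{\hat\imath}\Bigl(\max_{1\leq k<n}\bigl(S_k^2/k+(S_n-S_k)^2/(n-k)\bigr) - \max_{1\leq k\leq n^b}(S_n-S_k)^2/(n-k)\Bigr),
\]
and then invokes only the \emph{global} LIL $\max_{1\leq k<n}|S_k|/\sqrt{k}=\sqrt{(2+o_\mathbb{P}(1))\log\log n}$ (Cs\"org\H{o}--Horv\'ath, Theorem~A.4.1) together with the $O_\mathbb{P}(1)$ bound on $\max_{n/2<k<n}|S_k|/\sqrt k$ and $\max_{1\leq k\leq n/2}|S_n-S_k|/\sqrt{n-k}$ (Donsker). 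This sidesteps your ``analytic core.'' Your route --- a restricted-range maximal LIL $\max_{n^a\leq k\leq n^b}|S_k|/\sqrt k \geq \sigma\sqrt{(2-\epsilon)\log\log n}$ w.p.\ $\to 1$ via strong approximation and Ornstein--Uhlenbeck extreme-value asymptotics over a window of length $(b-a)\log n$ --- is a valid alternative, but it introduces a nontrivial auxiliary result that the paper avoids, and in fact is not needed even in your own framework: once you know $\hat\imath$ is the global argmax of $\widetilde S_k^2/[k(n-k)]$, the Darling--Erd\H{o}s theorem (already invoked in the paper's proof of Theorem~\ref{thm:master}) gives $\max_k\widetilde S_k^2/[k(n-k)]=(2+o_\mathbb{P}(1))(\log\log n)/n$ directly. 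One further caveat: citing ``KMT strong approximation'' is imprecise here, since the usual $O(\log n)$ KMT rate requires exponential moments, which Assumption~\ref{ass:DGP}.4 does not supply; what is actually available under that moment condition (and what the paper's Theorem~\ref{thm:master} proof uses) is a weaker strong invariance principle with error $o_\mathbb{P}((\log\log n)^{-1/2})$ after standardization, which suffices for the Darling--Erd\H{o}s-type conclusion. Otherwise, the steps connecting the argmax property, the negligibility of the centering $(k/n)S_n$ and of $S_n/n$, the order-statistic localization of $\hat\tau$, and the upgrade from $2(b-a)/e$ to $2b/e$ by letting $a\downarrow 0$ all match the paper in substance.
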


The theorem above shows that decision stumps can have, at most, $n^{b/2}$ (suboptimal) convergence for evaluation points that are within $n^{a-1}$ distance from the boundary of $\mathcal{X}$ (see \eqref{eq:rate_constant}), for \emph{any} $ a, b\in(0,1)$ with $ a < b $. This happens because the two nodes in the stump are highly imbalanced with non-trivial probability under Assumption \ref{ass:DGP}, with one containing a much smaller number of samples—thereby making local estimation difficult. An immediate implication of Theorem \ref{thm:rates} in the context of heterogeneous (in $\bx\in\mathcal{X}$) causal effect estimation is that the CART estimators discussed in Section \ref{sec: Causal Inference and Policy Decisions} can have poor performance in some regions of the covariate support, particularly near the boundaries of $\mathcal{X}$. 

\subsection{Past Work}

Theorem \ref{thm:rates} contributes to the literature in several ways. Our results indicate that when the goal is to approximate the unknown conditional expectation pointwise for all $\bx\in\mathcal{X}$, as it is the case in the analysis of heterogeneity in causal inference settings, decision trees will exhibit extremely slow convergence rates in some regions of the support, making those methods suboptimal from an approximation perspective. The phenomenon revealed in Theorems \ref{thm:master} and \ref{thm:rates} has been observed in various forms since the inception of CART \cite[Section 11]{breiman1984}. Historically, the phenomenon characterized in Theorem \ref{thm:master} has been called the \emph{end-cut preference}, where splits along noisy directions tend to concentrate along the end points of the parent node. More specifically, \citet[Theorem 11.1]{breiman1984} and \citet[Theorem 4]{ishwaran2015effect} showed that for each $ \delta \in (0, 1) $, $ \mathbb{P}( \hat\imath \leq \delta n \; \text{or} \;\hat\imath \geq (1-\delta)n) \rightarrow 1 $ as $n\rightarrow \infty $. However, unlike \eqref{eq:rate_constant} in Theorem \ref{thm:master} which characterizes regions of the support where the pointwise rates of estimation are slower than any \emph{polynomial-in-$n$}, their result only implies rates in uniform norm slower than any \emph{constant multiple} of the already nearly optimal rate $\sqrt{n/\log\log(n)}$, i.e., for any $ C > 0 $,
$$\lim_{n\to\infty} \mathbb{P}\Big(\sup_{x\in\mathcal{X}}|\hat\mu(T_1)(x) - \mu| \geq C\sigma n^{-1/2}\sqrt{\log\log(n)}\Big) =1.$$
Thus, past theoretical work is not strong enough to illustrate the weaknesses of decision trees for pointwise estimation (i.e., prior lower bounds in the literature were too loose to be informative).

In accordance with Theorem \ref{thm:rates}, simulation results from \citet[Supplement, Section B]{wager2018estimation}, and many others, also suggested that adaptive causal trees can have slow convergence at the boundaries of the support $\mathcal{X}$, but no formal theory supporting that numerical evidence was available in the literature until now. \citet{tang2018when} give sufficient theoretical conditions under which non-adaptive random forests (i.e., where the decision nodes are independent of the data) will be inconsistent, but those conditions do not apply to commonly used forest implementations nor are they shown to be realized by the data generating mechanism.

\cite{buhlmann2002analyzing} and \cite{banerjee2007confidence} showed that the minimizers $(\hat\beta_1, \hat\beta_2, \hat \tau)$ of \eqref{eq:se} at the root node converge to the population minimizers $ (\beta^*_1, \beta^*_2, \tau^*) $ at a cube-root $ n^{1/3} $ rate when the regression model \eqref{eq:model} satisfies specific regularity assumptions. Because the decision stump \eqref{eq:stump} can be expressed as $ \hat\mu(T_1)(x) = \hat\beta_1\Indicator(x \leq \hat \tau) + \hat\beta_2\Indicator(x > \hat \tau) $, their results can be used to study the asymptotic properties of $\hat\mu(T_1)(x)$. Among other things, they posit that the population minimizers $ (\beta^*_1, \beta^*_2, \tau^*) $ are unique and that the regression function $ \mu(x) $ is continuously differentiable and has nonzero derivative at $ \tau^* $. Theorem \ref{thm:rates} shows that the results in \cite{buhlmann2002analyzing} and \cite{banerjee2007confidence} are not uniformly valid in the sense that excluding the constant regression function from the allowed class of data generating processes is necessary for their results to hold for $x\in\mathcal{X}$.

\subsection{Uniform Minimax Rates}

Letting $\mathcal{P}$ be any set of data generating processes of interest that includes the location model in Assumption \ref{ass:DGP},  for any $b \in (0, 1) $, we immediately obtain from \eqref{eq:rate_constant2} that
$$\liminf_{n\to\infty}\sup_{\mathbb{P}\in\mathcal{P}}\mathbb{P}\Bigg(\sup_{x\in\mathcal{X}}|\hat\mu(T)(x) - \mu| \geq \sigma n^{-b/2}\sqrt{(2+o(1))\log\log(n)}\Bigg) \geq (2/e)b,$$
where $ T $ is any tree constructed using conventional CART methodology with at least one split.
Therefore, decision trees grown with CART methodology cannot converge faster than any polynomial-in-$n$, when uniformity over the full support of the data $\mathcal{X}$, and over possible data generating processes, is of interest. 

\subsection{Honest Trees}

While Theorem \ref{thm:rates} deals with depth $K=1$ adaptive trees (i.e., the same data is used for determining the split points and terminal node output), analogous results hold for honest trees. The honest tree output is
\begin{equation} \label{eq:honest_tree}
\tilde\mu(T)(\bx) = \frac{1}{\#\{\tilde \bx_i \in \bt\}}\sum_{\tilde \bx_i \in \bt} \tilde y_i, \quad \bx \in \bt,
\end{equation}
where $( \tilde y_i, \tilde \bx_i^T) $, $i=1,2,\dots,n$, are independent samples from those which were used to construct the decision nodes (i.e., the partition of $ \mathcal{X} $), and $ n(\bt) = \#\{\tilde \bx_i \in \bt\} > 0 $. To simplify calculations, we define $ \tilde\mu(T)(\bx) = \mu(\bx) $ if $ n(\bt) = 0 $, an event that occurs with vanishingly small probability.

Conditional on the data used to construct the partition, the honest decision stump $ \tilde \mu(T_1)(x) $ at $ x = 0 $ is an average of (approximately) $ \hat\imath $ response values, and so we expect its variance (equal to mean squared error) to be approximately $ \sigma^2/\hat\imath $. The problem is that, according to Theorem \ref{thm:master}, the split index $ \hat\imath $ is much smaller than $ n $, with non-vanishing probability. More rigorously, using a conditioning argument 
and \eqref{eq:split_range}, it follows that $ \tilde\mu(T_1)(x) $ converges uniformly no faster than
\begin{equation} \label{eq:honest_lower}
\mathbb{E}\Bigg[\sup_{x\in\mathcal{X}}\,\big(\tilde\mu(T_1)(x)-\mu\big)^2\Bigg] \geq \sigma^2\mathbb{E}\Bigg[\frac{(1-2^{-\hat\imath})^2}{\hat\imath}\Bigg] \gtrsim \frac{\sigma^2}{n^b},
\end{equation}
for any $ b \in (0, 1) $, and $n$ large enough.

\subsection{Simulation Evidence} \label{sec:numerical_stump}

We illustrate the implications of Theorems \ref{thm:master} and \ref{thm:rates} numerically with $p=1$. In Figure \ref{fig:boundary}, we plot the pointwise root mean squared error (RMSE) $ \sqrt{\mathbb{E}[(\hat\mu(T_1)(x)-\mu)^2]} $, approximated by $500$ replications, when $ \mu = 0 $, $ \varepsilon_i \thicksim N(0, 1) $, and $ n = 1000 $. In Figure \ref{fig:boundary_causal}, we consider the context of the causal model discussed in Section \ref{sec: Causal Inference and Policy Decisions}, with a constant treatment effect $\theta(x) = 1 $ and $ \mathbb{E}[y_i(0)]= 0 $, $ d_i \thicksim \text{Bern}(0.5) $, and $ \varepsilon_i \thicksim N(0, 1) $, again with $ n = 1000 $ and $ 500 $ replications. We plot the pointwise RMSE for an honest causal decision stump with output based on the regression estimator $\hat\theta_\text{reg}(T_1)(x)$ constructed using the adjusted expected MSE splitting criterion proposed by \cite{athey2016recursive}. The transformed outcome tree, $\hat\theta_\text{ipw}(T_1)(x) $, exhibits similar empirical behavior. Both plots corroborate with Theorem \ref{thm:rates}: the decision stump has smallest pointwise RMSE near the center of the covariate space, but the performance degrades as the evaluation points move closer to the boundary.

\begin{figure}
\centering
    \begin{subfigure}[t]{0.4\textwidth}
         \centering
        \includegraphics[width=1\textwidth]{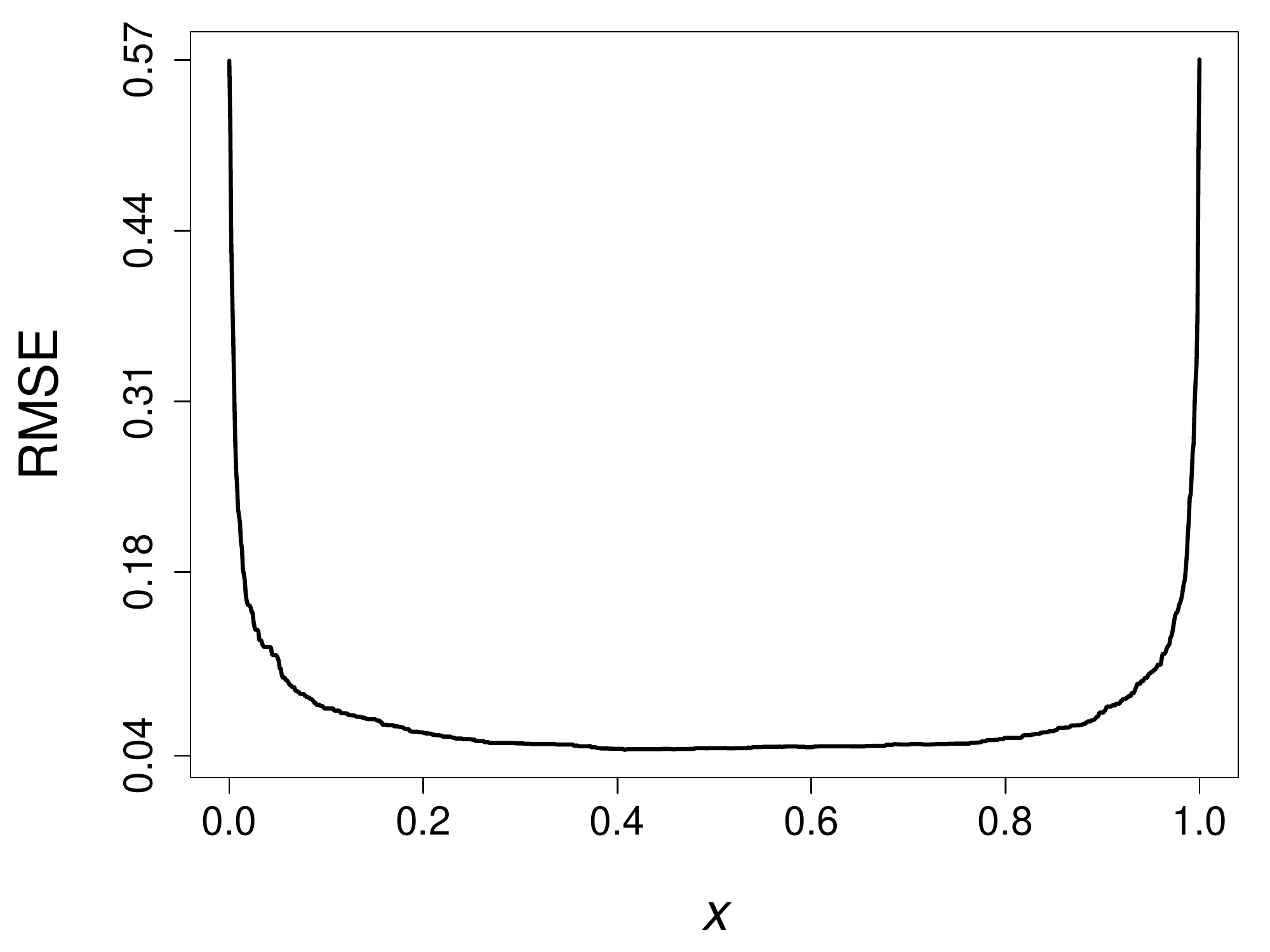}
    \caption{Pointwise RMSE of decision stump.}
          \label{fig:boundary}
\end{subfigure}
\hspace{1.5cm}
\begin{subfigure}[t]{0.4\textwidth}
    \centering
    \includegraphics[width=1\textwidth]{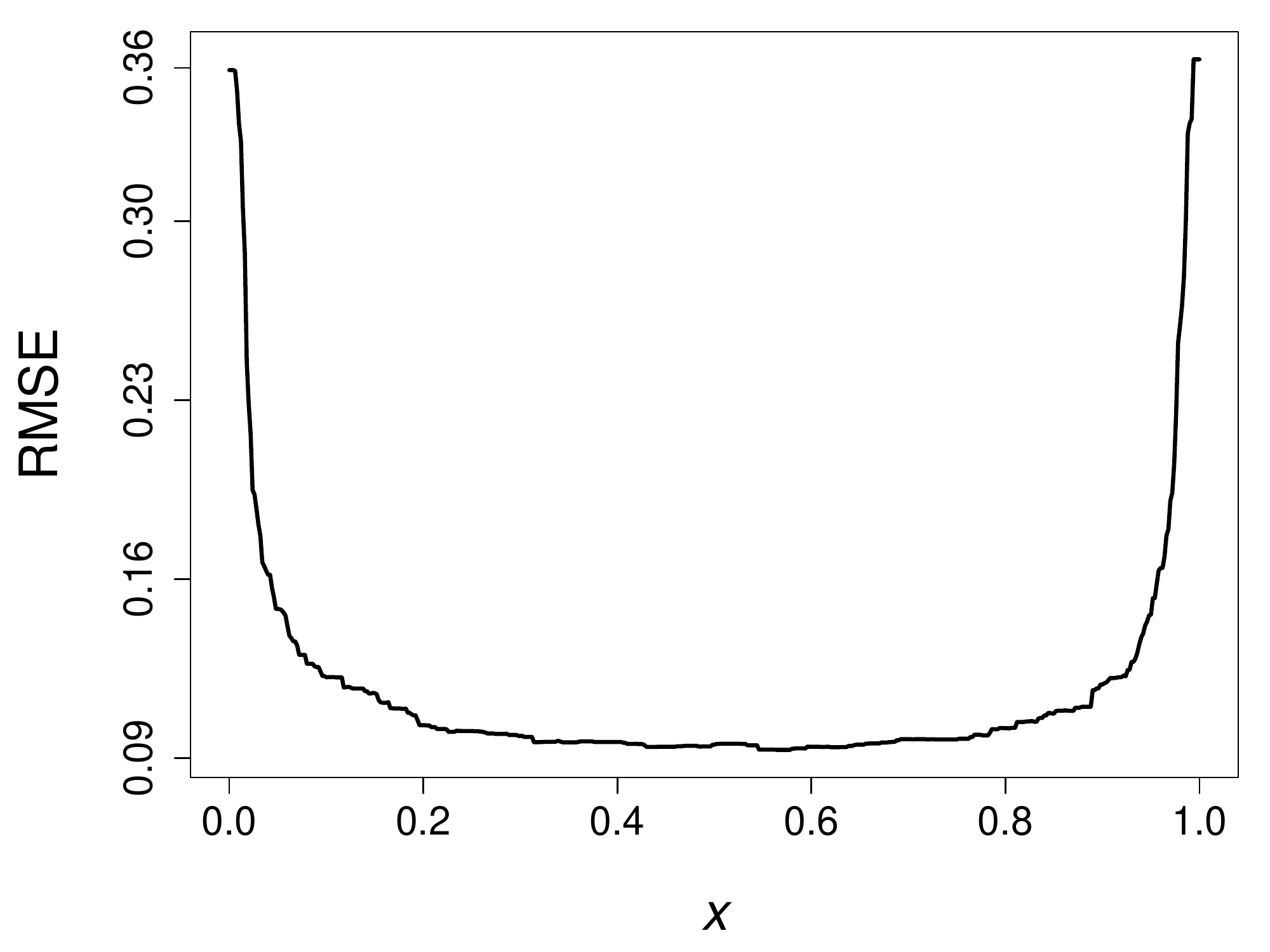}
    \caption{Pointwise RMSE of causal decision stump.}
          \label{fig:boundary_causal}
          \end{subfigure}                   
          \caption{Pointwise RMSE of decision stumps for location model.}
         \label{fig:stump}
\end{figure}

The following section investigates further the role of honesty in the construction of deeper trees, and shows an even stronger result: honest trees will be inconsistent on some (at least countably many) regions of $\mathcal{X}$ whenever the trees are grown up to depth $K\approx \log\log(n)$. In other words, shallow (honest) regression trees can be uniformly inconsistent, a result that is intuitively anticipated from Theorems \ref{thm:master} and \ref{thm:rates} because even after one single split there is non-trivial probability of having small cells with only a few observations, and repeating this process further down the tree can only exacerbate the issue.

The main results in this section were derived in the simplest possible case (constant regression model, $p=1$, $K=1$, etc.), but the main conclusions are applicable more generally. The key phenomenon captured by Theorems \ref{thm:master} and \ref{thm:rates} are only exacerbated in multi-dimensional settings ($p>1$) or for multi-level decision trees ($K>1$). 
%We already demonstrated this fact for multi-dimensional covariates in \eqref{eq:rate_p}, and 
We will formalize the shortcomings associated with deeper honest trees in the next section. 

\section{Inconsistency with Deeper Trees}

The previous section provides a pessimistic view on depth one ($K=1$) decision trees: decision stumps can have slow convergence for the simplest regression models in some regions of $\mathcal{X}$. We now discuss formally situations where decision trees can be \emph{inconsistent }(i.e., fail to converge) altogether, if grown only to depth $K\approx \log\log(n)$. As is customary in the literature, we will focus on trees that are honest, which are believed to offer better empirical performance \citep{athey2016recursive}. 

\begin{definition}[Honest CART (CART+)]
At each level of the tree, including the output in the terminal nodes, generate new response values $ \{ \tilde y_i : i = 1, 2, \dots, n\} $. Each node $ \bt $ from the parent level is further refined by selecting a split direction and split point that minimizes the CART squared error criterion \eqref{eq:se} with data
$
\{ (\tilde y_i, \bx_i^T) :  \bx_i \in \bt \}.
$
The output of the tree $ T $ at a point $ \bx $ belonging to a terminal node $ \bt $ is $ \tilde\mu(T)(\bx) = \frac{1}{\#\{ \bx_i \in \bt \}}\sum_{ \bx_i \in \bt} \tilde y_i $.
\end{definition}

The only difference between conventional CART and CART+ is that the split points at each level are determined using a new, statistically independent set of response values, although the input values remain the same. Importantly, the adaptive properties of the tree are retained, as the nodes are still refined by minimizing the empirical squared error \eqref{eq:se}. 
 For our purposes, problems will arise as soon as the depth $ K $ is approximately $ \log\log(n) $ and so we expect there to be little practical difference between CART+ and the original CART algorithm when the sample size is large.

CART+ serves as a phenomenological model of conventional CART and allows us to analyze its pointwise (and uniform in $\bx\in\mathcal{X}$) behavior. Importantly, the formulation of CART+ and Assumption \ref{ass:DGP} together ensure that the split points have a desirable Markovian property: a split point $ \tilde \tau \in [\tau_1, \tau_2] $ conditioned on its immediate ancestor split points $ \tilde \tau_1 = \tau_1 $ and $ \tilde \tau_2 = \tau_2 $ is independent of all ancestor split points, including $ \tilde \tau_1 $ and $ \tilde \tau_2 $.

\begin{theorem} \label{thm:main}
Suppose Assumption \ref{ass:DGP} holds and $ p = 1 $. Consider a maximal depth $ K_n \gtrsim \log\log(n)$ tree $T_{K_n}$ constructed with CART+ methodology. Then, there exists a positive constant $C$ such that
	$$
	\liminf_{n\to\infty}\mathbb{P}\Bigg(\sup_{x\in\mathcal{X}}|\tilde\mu(T_{K_n})(x)-\mu| > C \Bigg) > 0.
	$$
\end{theorem}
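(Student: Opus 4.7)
The strategy is to exhibit a single descending chain of cells—always descending into the smaller child—whose sizes contract so rapidly that after $K_n \gtrsim \log\log n$ levels the terminal cell contains only a bounded number of original observations with non-vanishing probability. The honest output on such a cell is then a sample average of $O(1)$ fresh noise terms, and non-degeneracy of $\varepsilon$ pushes it away from $\mu$ by a constant with positive probability.

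\textbf{Markov structure.} Let $\bt^{(0)}\supset\bt^{(1)}\supset\cdots\supset\bt^{(K_n)}$ be the path chosen by moving, at each level, into the child with fewer original observations, and write $n_k:=\#\{\bx_i\in\bt^{(k)}\}$ with $n_0=n$. Because CART+ refines at level $k{+}1$ using an independent fresh copy $\tilde y^{(k+1)}$ of the responses, conditional on $n_k$ the sub-problem inside $\bt^{(k)}$ is an iid Assumption~\ref{ass:DGP} sample of size $n_k$ to which Theorem~\ref{thm:master} applies. In particular $\{n_k\}$ is a Markov chain.

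\textbf{Coupling to a Bernoulli random walk.} Let $\hat\imath_k$ denote the root split index inside $\bt^{(k)}$. Applying Theorem~\ref{thm:master} with $b=1/2$ and $a\downarrow 0$ to both disjoint end-cut events in \eqref{eq:split_range} gives
\begin{equation*}
\liminf_{n_k\to\infty}\mathbb{P}\bigl(\min(\hat\imath_k,\,n_k-\hat\imath_k)\leq n_k^{1/2}\,\big|\,n_k\bigr)\;\geq\;\tfrac{1}{e}.
\end{equation*}
Fix any large threshold $N_0$; while $n_k\geq N_0$, stochastically lower bound the indicators $Z_k:=\Indicator\{n_{k+1}\leq n_k^{1/2}\}$ by an iid sequence $\overline Z_k\sim\mathrm{Bernoulli}(1/e-\epsilon)$. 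At each "success" $Z_k=1$ the size satisfies $\log n_{k+1}\leq \tfrac12\log n_k$ and otherwise $\log n_{k+1}\leq\log n_k$, so $\log n_{K_n}\leq (1/2)^{S_{K_n}}\log n$ where $S_{K_n}:=\sum_{k=0}^{K_n-1} Z_k$. Taking $K_n=\lceil c\log\log n\rceil$ with any $c>e/\log 2$, the weak law of large numbers applied to $\sum\overline Z_k$ yields $S_{K_n}\geq (1/e-2\epsilon)K_n\geq \log_2\log n+1$ with probability tending to $1$; hence $\log n_{K_n}\leq 1/2$, so $n_{K_n}=1$ (if the chain has not already dropped below $N_0$, in which case monotonicity of $\{n_k\}$ gives $n_{K_n}\leq N_0$). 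Either way, $\{n_{K_n}\leq N_0\}$ has probability bounded below.

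\textbf{Conclusion.} On $\{n_{K_n}=m\}$ with $1\leq m\leq N_0$, and for any $\bx^*\in\bt^{(K_n)}$, the honest output satisfies
\begin{equation*}
\tilde\mu(T_{K_n})(\bx^*)-\mu\;=\;\frac{1}{m}\sum_{\bx_i\in\bt^{(K_n)}}\tilde\varepsilon_i^{(K_n+1)},
\end{equation*}
which, conditional on the partition, is a mean-zero sample average of $m$ iid copies of $\varepsilon$ independent of the partition. Since $\sigma^2>0$ and $\varepsilon$ is non-degenerate by Assumption~\ref{ass:DGP}(4), there exist constants $C,\rho>0$ with $\mathbb{P}(|m^{-1}\sum\tilde\varepsilon_i^{(K_n+1)}|>C\mid n_{K_n}=m)\geq\rho$ uniformly over $1\leq m\leq N_0$. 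Combining with $\sup_{x}|\tilde\mu(T_{K_n})(x)-\mu|\geq|\tilde\mu(T_{K_n})(\bx^*)-\mu|$ yields the claimed non-vanishing lower bound. The subtlest point is that Theorem~\ref{thm:master} gives only an asymptotic (in $n$) lower bound on end-cut probabilities, so one must argue that the $o(1)$ corrections do not cumulate destructively across $\sim\log\log n$ levels and that the state-dependent success probability is compatible with an iid stochastic lower bound; fixing the threshold $N_0$, running the dominating iid Bernoulli walk only while $n_k\geq N_0$, and folding the complementary case into the trivially small-size event $\{n_k\leq N_0\}$ resolves this.
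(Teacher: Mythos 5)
Your proof is correct and follows the same two-stage structure as the paper's: (1) show that a descending chain of node sizes must, after $\gtrsim\log\log n$ levels, land on a cell with $O(1)$ observations with non-vanishing probability, and (2) conclude that the honest output on such a cell is an $O(1)$-sample noise average bounded away from zero with positive probability. The implementation differs in a few respects. The paper tracks the left-most cell (the one containing $x=0$) and works directly with the CDF of its size $\tilde n_k$: Lemma~\ref{lmm:recursion} gives the distributional recursion $\mathbb{P}(\tilde n_k\le m)\ge(1-\delta)\mathbb{P}(\tilde n_{k-1}\le m)+\delta\,\mathbb{P}(\tilde n_{k-1}\le m^c)$, which is unrolled by induction into a negative-binomial lower bound on the number of size drops, and then closed with Markov's inequality. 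You instead follow the smaller child at each level (which picks up both end-cut events of Theorem~\ref{thm:master} and hence roughly doubles the per-level success probability), couple the size-drop indicators to an iid Bernoulli sequence while $n_k\ge N_0$, and invoke the WLLN. Both routes are valid; the paper's recursion avoids the stopping-time bookkeeping needed to justify the iid minorization, while your version is arguably more transparent about why $K\asymp\log\log n$ suffices (one needs $\sim\log_2\log n$ halvings of $\log n_k$). For the final step, the paper applies Paley–Zygmund to $\tilde\mu(T_K)(0)$, whereas you argue directly from non-degeneracy of $\varepsilon$ and finiteness of $N_0$; again both work, though the paper's Paley–Zygmund route is self-contained in a way that sidesteps the need to verify the uniformity of $\rho$ over $1\le m\le N_0$, which you correctly handle by the finiteness of $N_0$. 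Two small points worth tightening in your write-up: (a) Theorem~\ref{thm:master} requires fixed $a<b$, so you should fix a small $a>0$ rather than write ``$a\downarrow 0$''; and (b) the non-degeneracy you invoke is $\mathbb{E}[\varepsilon^2]>0$ from Assumption~\ref{ass:DGP}(4), which you should cite explicitly since it is what guarantees that a bounded-size sample average of $\varepsilon$'s is non-degenerate.
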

This theorem shows that very shallow trees grown with the conventional squared error criterion can be pointwise (and hence uniform in $\bx\in\mathcal{X}$) inconsistent. To put the iterated logarithm scaling of the depth $K$ into perspective, if $ n = 1\; \text{billion} $, then $ \log\log(n) \approx 3 $, a typical depth seen in practice.

The pointwise error in Theorem \ref{thm:main} should be contrasted with the IMSE. Under Assumption \ref{ass:DGP}, if $ K \asymp \log\log(n) $, then
\begin{equation} \label{eq:imse_rate}
\mathbb{E}\Bigg[\int_{\mathcal{X}}(\tilde \mu(T_K)(x)-\mu)^2 \mathbb{P}_x(dx)\Bigg] \leq \frac{2^{K+1}\sigma^2}{n+1} = O\Bigg(\frac{\sigma^2\text{poly-log}(n)}{n}\Bigg),
\end{equation}
and hence by Markov's inequality,
$$
\lim_{n\to\infty} \mathbb{P}\Bigg( \int_{\mathcal{X}}(\tilde \mu(T_K)(x)-\mu)^2 \mathbb{P}_x(dx) > \frac{\sigma^2/n}{\text{poly-log}(n)} \Bigg) = 0.
$$
Therefore, the IMSE of the pointwise inconsistent depth $ K \asymp \log\log(n) $ decision tree decays at the optimal $\sqrt{n}$ rate, up to poly-logarithmic factors. This shows that the performance of the tree varies widely depending on whether the input $ x $ is average or worst case.

The intuition for Theorem \ref{thm:main} is based similarly on Theorem \ref{thm:master}, but for depth $K$ trees constructed with CART+ methodology. That is, honest trees of depth only $K \approx \log\log(n)$ will generate cells near the boundaries of the support $\mathcal{X}$ containing a finite number of observations with probability bounded away from zero. The inequality \eqref{eq:split_range} implies that, with probability bounded away from zero, the number of observations in a child node $ \bt' $ of a parent node $ \bt $ (near the boundary of $ \mathcal{X} $) satisfies $ n(\bt') \leq (n(\bt))^{b} $. It turns out that the number of times this occurs after $ K $ splits is stochastically dominated by a negative binomial random variable, providing a lower bound on the probability that a maximal depth $ K $ tree will have, at most, $ n(\bt) \leq n^{b^K} $ observations in terminal nodes near the boundary of $ \mathcal{X} $. Since $ b \in (0,1) $, the bound $ n^{b^K} $ is a constant whenever $K$ exceeds a constant multiple of $ \log\log(n)$.

It is important to note that the aforementioned inconsistency of honest regression trees need not occur at the boundary of the support $\mathcal{X}$. By a symmetry argument, if $ \tilde \tau $ is any split point that occurs at a fixed depth in the tree, then $ \tilde\mu(T_K)(\tilde \tau) $ will also fail to converge to $ \mu $ if the tree has maximal depth $ K \gtrsim \log\log(n) $. In other words, after reaching depth $ J \geq 1 $, inconsistency will occur at any of the (at most) $2^{J}+1$ endpoints associated with the $2^J$ cells, whenever we grow the tree to a total depth of $ J+K $ such that $ K \gtrsim \log\log(n)$ as $ n\rightarrow \infty$. 

\section{Pruning} \label{sec:pruning}

Pruning is a well-established strategy for mitigating some of the ill consequences of working with trees, such as overfitting. In some cases, however, pruning will not help. Indeed, as the previous section has revealed, depth one trees can have extremely slow convergence near the boundary of the covariate space. While this phenomenon holds for location models, it can also manifest with models that have a strong dependence on the covariates. For example, if the first split at the root node is along a variable $ x_j $ such that the marginal projection $ \mathbb{E}[y \mid x_j ] $ is constant---resembling the location model in Assumption \ref{ass:DGP} marginally---then, according to the previous discussion, the tree will almost always produce one cell with very few observations, but no amount of pruning at lower depths will help. The \emph{checkerboard} model \citep{bengio2010} in $ p = 2$ dimensions is an example where $y$ is marginally independent of both covariates. That is, if $ y_i = \text{sgn}( x_{i1}-0.5)\text{sgn}(x_{i2}-0.5) + \varepsilon_i $, where $ \bx_i \thicksim U([0,1]^2) $ and $ \varepsilon_i \thicksim N(0, 1) $ are independent, then $ y_i $ given $ x_{ij} = x_j $ is distributed as a symmetric two-component Gaussian mixture, free from $ x_j $.

To illustrate the point above numerically on a model with a smooth regression function, suppose $ y_i = (x_{i1}-0.5)(x_{i2}-0.5) + \varepsilon_i $, where $ \bx_i \thicksim U([0,1]^2) $ and $ \varepsilon_i \thicksim N(0, 1) $ are independent. As $ \mathbb{E}[y_i  \mid x_{ij} = x_j] = 0 $ for $ j = 1, 2$, the response variable has no marginal dependence on either covariate. Figure \ref{fig:boundary_prune} displays the results of a computer experiment with $ n = 1000 $ and $500$ replications. The plot shows the pointwise RMSE of a pruned tree $ T$ with output $ \hat\mu(T)(\bx) $ at $ \bx = (0, x_2)$ as $ x_2 $ ranges from $0$ to $1$. Similarly, Figure \ref{fig:boundary_prune_causal} shows the result of fitting a pruned causal tree $T$ with output $\hat\theta_\text{reg}(T)(\bx)$, constructed using honesty and the adjusted expected MSE splitting criterion proposed by \cite{athey2016recursive}. The experiment consists of $500$ replications from the model $ y_i = d_i(x_{i1}-0.5)(x_{i2}-0.5) + \varepsilon_i $, where $ d_i \thicksim \text{Bin}(0.5) $, $ \bx_i \thicksim U([0,1]^2) $, and $ \varepsilon_i \thicksim N(0, 1) $ are independent, and $n = 1000 $.  We do not include the transformed outcome tree $\hat\theta_\text{ipw}(T)(\bx) $ as it also produces a similar plot. In both cases, the numerical evidence indicates that pruning does not mitigate the lack of uniform consistency over $\mathcal{X}$ and the poor performance near the boundary persists.

\begin{figure}
\centering
    \begin{subfigure}[t]{0.4\textwidth}
         \centering
        \includegraphics[width=1\textwidth]{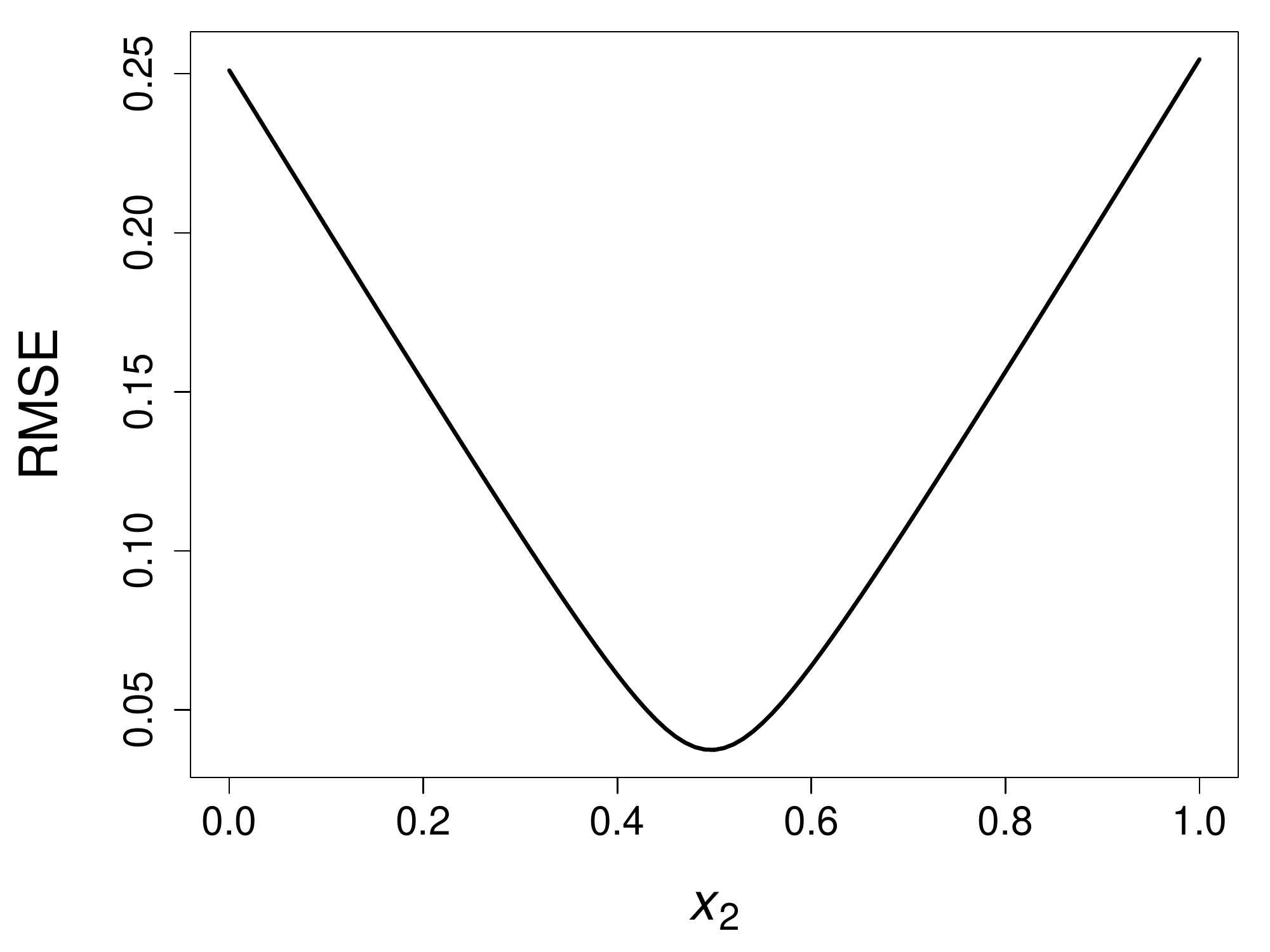}
    \caption{Pointwise RMSE for pruned tree at $ \bx = (0, x_2)^T$.}
          \label{fig:boundary_prune}
\end{subfigure}
\hspace{1.5cm}
\begin{subfigure}[t]{0.4\textwidth}
    \centering
    \includegraphics[width=1\textwidth]{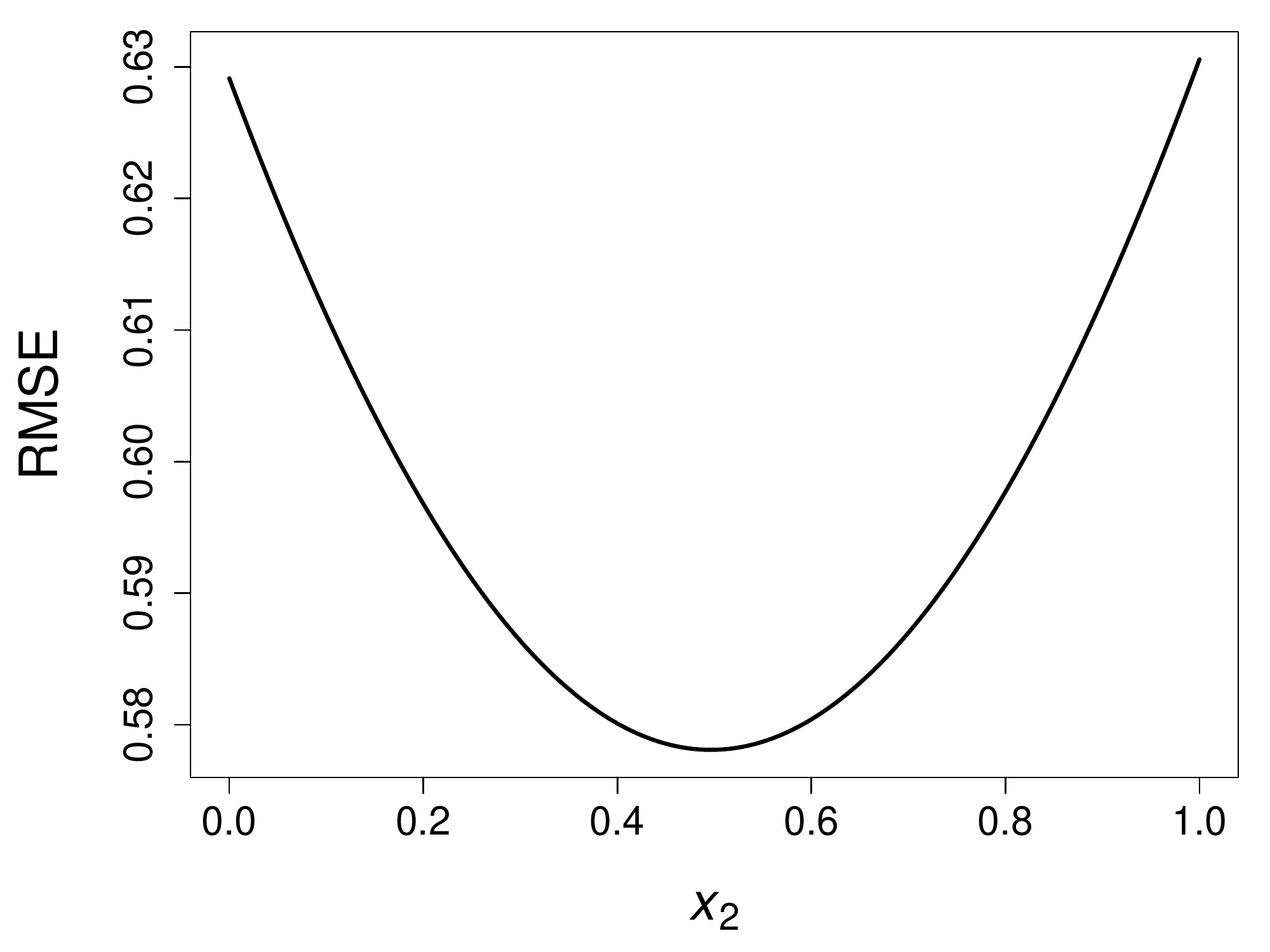}
    \caption{Pointwise RMSE for pruned causal tree at $ \bx = (0, x_2)^T$.}
          \label{fig:boundary_prune_causal}
          \end{subfigure}
          \caption{Pointwise RMSE of pruned trees for models where $ \bx $ and $ y $ are dependent.}
\end{figure}

\section{Random Forests}

At this point, the curious reader may wonder whether ensemble learning can address some of the convergence issues with decision trees. Here we consider \emph{honest random forests}, developed by \cite{wager2018estimation}. Specifically, for each tree in the ensemble, we randomly sample a subset $ S \subset \{ 1, 2, \dots, n\} $ of size $ s $ and, among the data $ \{(\bx_i, y_i)\}_{ i \in S} $, use half for determining the splits and the other half for estimating the conditional mean in the terminal nodes (the division of $ S $ into two equally sized subsets occurs randomly). More specifically, for each $ S \subset \{1, 2, \dots, n \} $ with $ |S| = s $, let $ S_0 $ denote the portion used for determining the splits and $ S_1 $ be the portion used for estimating the conditional mean in the terminal nodes. The set of all such subsamples is denoted by $ \mathcal{S} = \{S= S_1\cup S_0 \subset \{1, 2, \dots, n\}: S_0\cap S_1 = \emptyset, \; |S_0|=|S_1|=s/2\} $. In addition, at each node, a particular variable is split if it yields the smallest SSE \eqref{eq:se} among a random selection $ M \subset \{1, 2, \dots, p \} $ of $ m = mtry $ candidate directions. The set of all candidate variable selections is denoted by $ \mathcal{M} = \{M \subset \{1, 2, \dots p\} : |M| = m\}$. This idea can be applied to regression trees to obtain a regression forest, or causal decision tree estimators \eqref{eq:imp} or \eqref{eq:ipw} to obtain a causal forest, though, for simplicity, here we only consider the regression setting.

To get a sense of the improvement that forests offer over trees, we specialize to the case where the constituent trees in the forest are honest decision stumps (i.e., honest trees \eqref{eq:honest_tree} with depth $K=1$). The decision stump output $\hat\mu(T_1)(\bx) $ constructed in this way is denoted by $ \hat\mu(T(M, S))(\bx) $ and the (regression) random forest output is $ \hat\mu_B(\bx) = B^{-1}\sum_{b=1}^B \hat\mu(T(M_b, S_b))(\bx) $, where $ (M_1, S_1), (M_2, S_2), \dots, (M_B, S_B) $ are independent copies of $ (M, S) $.
When the number of trees $ B$ is large, the honest random forest can be approximated by
$$
\hat\mu(\bx) = \frac{1}{\binom{n}{s}\binom{s}{s/2}\binom{p}{m}} \sum_{S \in\mathcal{S}}\sum_{M \in\mathcal{M}}\hat\mu(T(M, S))(\bx).
$$

The next theorem provides an upper bound on its pointwise error.

\begin{theorem}\label{thm:forests}
Suppose Assumption \ref{ass:DGP} holds, and, additionally, that $ x_{i1}, x_{i2}, \dots, x_{ip} $ are independent. If $n \to \infty$, $p\to \infty$, $s = o(n^{1/3})$, and $ m = o(p/s) $, then for all $ \bx \in \mathcal{X} $,
$$\mathbb{E}\big[(\hat \mu(\bx) - \mu)^2\big]
\leq (\sigma^2/n)(1+(s/2)(m/p)+o(1)).
$$
\end{theorem}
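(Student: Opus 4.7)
The plan is to exploit honesty to reduce the problem to a variance calculation and then expand it as a double sum over pairs of honest stumps, using an explicit weight representation of the forest. First, under Assumption~\ref{ass:DGP} and honesty, each constituent stump is unbiased: conditioning on the partition (a deterministic function of $\{(\bx_i, y_i)\}_{i \in S_0}$) together with $\{\bx_i\}_{i \in S_1}$, the stump output $N^{-1}\sum_{i \in S_1, \bx_i \in \bt} y_i$ has conditional mean $\mu$. Consequently $\mathbb{E}[(\hat\mu(\bx) - \mu)^2] = \mathrm{Var}(\hat\mu(\bx))$, and I reduce to bounding the forest variance.

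The next step is to write $\hat\mu(\bx) - \mu = \sum_{i=1}^n W_i \varepsilon_i$, where $W_i = (B')^{-1}\sum_{(M,S)} w_i^{(M,S)}$ with $w_i^{(M,S)} = \Indicator(i \in S_1, \bx_i \in \bt(M,S))/N(M,S)$ and $B' = |\mathcal{S}||\mathcal{M}|$. The key structural observation I would exploit is that $w_i^{(M,S)}$ is nonzero only when $i \in S_1$, in which case $i \notin S_0$ and the split point (hence $\bt$ and $N$) depends only on $\{(\bx_k, \varepsilon_k)\}_{k \in S_0}$. Therefore every nonzero $w_i^{(M,S)}$ is measurable with respect to $\{(\bx_k, \varepsilon_k)\}_{k \ne i}$, so $W_i \perp \varepsilon_i$. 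Expanding the square,
\[
\mathbb{E}[(\hat\mu(\bx) - \mu)^2] = \sigma^2\, \mathbb{E}\Big[\textstyle\sum_i W_i^2\Big] + \sum_{i \ne j} \mathbb{E}[W_i W_j \varepsilon_i \varepsilon_j].
\]
For the cross term, a summand $\mathbb{E}[w_i^{(M,S)} w_j^{(M',S')} \varepsilon_i \varepsilon_j]$ is nonzero only if both $\varepsilon_i$ and $\varepsilon_j$ actually appear in the product $w_i^{(M,S)} w_j^{(M',S')}$, which forces $i \in S_1 \cap S_0'$ and $j \in S_0 \cap S_1'$; combining $|w^{(M,S)}| \le 1$ with a combinatorial count of such quadruples yields a remainder of order $O(\sigma^2 s^4/n^2)=o(\sigma^2/n)$ under $s = o(n^{1/3})$.

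It remains to bound $n\, \mathbb{E}[W_1^2] = n(B')^{-2} \sum_{(M,S),(M',S')} \mathbb{E}[w_1^{(M,S)} w_1^{(M',S')}]$. I classify contributing pairs (those with $1 \in S_1 \cap S_1'$) by whether the realized split directions $\hat\jmath, \hat\jmath'$ coincide. Under the coordinate-independence assumption, when $\hat\jmath \ne \hat\jmath'$ the indicators $\Indicator(\bx_1 \in \bt)$ and $\Indicator(\bx_1 \in \bt')$ depend on different coordinates of $\bx_1$ (and similarly for $N, N'$); combined with the exchangeability identity $\mathbb{E}[\Indicator(\bx_1 \in \bt)/N \mid 1 \in S_1] = 2/s$, I expect distinct-variable pairs to aggregate to the baseline $\sigma^2/n$. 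When $\hat\jmath = \hat\jmath'$, the quantity $\Indicator(\bx_1 \in \bt \cap \bt')/(NN')$ is of the larger order $\sim 2/s$, and $\mathbb{P}(\hat\jmath = \hat\jmath')$ is controlled by the probability that $M, M'$ select the same variable, which is of order $m/p$, yielding the correction $(s/2)(m/p)\,\sigma^2/n$.

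The chief technical obstacle is the detailed same-variable, overlapping-subsample calculation. Precisely evaluating $\mathbb{E}[w_1^{(M,S)} w_1^{(M',S')}]$ when $|S \cap S'| \ge 1$ requires untangling the dependence between the data-driven split $\hat\tau$ and the membership event $\bx_1 \in \bt$, and then carrying out the combinatorial sum over all four-tuples $(S, S', M, M')$. The coordinate-independence assumption is what makes this tractable: it decouples $\bx_1 \in \bt$ from $\bx_1 \in \bt'$ across distinct split variables, confining the ``correlated'' contributions to the relatively narrow class of same-variable pairs, whose mass is $O(m/p)$ under $m = o(p/s)$.
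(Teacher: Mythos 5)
Your high-level decomposition matches the paper's: reduce to a variance via honesty, expand the MSE as a double sum over pairs of trees and pairs of data indices, use $\varepsilon_i \perp \bx_i$ and the disjointness of the split and estimation subsamples to annihilate most cross-terms, and then split the surviving diagonal contribution by whether the two trees choose the same split coordinate. The weight representation $\hat\mu(\bx)-\mu = \sum_i W_i\varepsilon_i$ is just a repackaging of the paper's direct expansion of $\mathbb{E}[(\hat\mu(\bx))^2]$ into its five cases, and your identification of the nonvanishing cross-terms as exactly those quadruples with $i \in S_1 \cap S_0'$ and $j \in S_1' \cap S_0$ agrees with the paper's Case 3.

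However, there is a concrete quantitative gap in the cross-term bound. You claim that $|w^{(M,S)}|\le 1$ together with the combinatorial count yields a remainder $O(\sigma^2 s^4/n^2)$ and then assert this is $o(\sigma^2/n)$ under $s = o(n^{1/3})$. That last implication is false: with $s = o(n^{1/3})$ you only get $s^4/n^2 = o(n^{-2/3})$, which is strictly larger than $n^{-1}$. The crude bound $|w|\le 1$ throws away the essential decay $w \lesssim 1/N$, which after averaging over the estimation subsample satisfies $\mathbb{E}[\Indicator(\bx_i\in\bt)/N] \lesssim 2/s$ (the paper's Lemma \ref{lmm:bin}). Restoring that factor of $1/s$, together with $\mathbb{E}[|S_1'\cap S_0||S_1\cap S_0'|] \le s^4/(16\,n(n-s/2))$ (Lemma \ref{lmm:diff_product}), gives the correct order $O(\sigma^2 s^3/n^2)$, which is $o(\sigma^2/n)$ exactly under $s=o(n^{1/3})$; this is why the paper's condition on $s$ is $n^{1/3}$ and not $n^{1/4}$.

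The second, larger gap is that the dominant diagonal contribution $n\,\mathbb{E}[W_1^2]$ is only sketched. You correctly guess the two target orders $(s/2)(m/p)/n$ and $1/n$ for the same-variable and distinct-variable blocks, but you explicitly defer the calculation. The ``exchangeability identity'' $\mathbb{E}[\Indicator(\bx_1\in\bt)/N \mid 1 \in S_1] = 2/s$ you invoke is only an upper bound, not an identity. Moreover the distinct-variable block does not aggregate exactly to $1/n$; it carries a correction of order $m/(np)\cdot(n/(n-s))^2$ (the paper's Case 5), whose control requires untangling, for a shared index $i = i'$, the partial dependence between $w_i^{(M,S)}$ and $w_i^{(M',S')}$ through $S_1 \cap S_1'$, $|S_1\setminus S_0'|$, and $M \cap M'$. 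This is precisely the content of the paper's Lemmas \ref{lmm:cap} and \ref{lmm:diff} and is the technical core of the theorem; as written, your proposal names the obstacle without resolving it.
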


This theorem showcases explicitly the effect of both subsampling and the random variable selection mechanism---each is important for reducing variance. According to past work that utilizes the Hoeffding-Serfling variance inequality for U-statistics \citep{wager2018estimation, buhlmann2002analyzing}, subsampling allows us to achieve a pointwise error $$ \mathbb{E}\big[(\hat\mu(\bx) - \mu)^2\big] \lesssim \sigma^2s/n, $$
which is significantly better than the arbitrarily slow polynomial-in-$n$ rates for individual trees (see Theorem \ref{thm:rates}), but still suboptimal since $ s $ is typically chosen to grow with the sample size to reduce bias when it exists. The result becomes more interesting when we account for the random variable selection mechanism, because it further reduces the error by decorrelating the constituent trees. Therefore, if the dimensionality $ p $ is large relative to $ s $ and $ m = o(p/s) $, then it is possible to achieve the \emph{exact} optimal $\sqrt{n}$ rate---a vast improvement over the $ n^{b/2} $ rate for individual trees. 
The price paid for such improvement is the inclusion of two additional tuning parameters for implementation ($s$ and $m$), and the loss of interpretability for the resulting estimates. 

\begin{figure}
   \centering
\begin{subfigure}[t]{0.4\textwidth}
    \centering
    \includegraphics[width=1\textwidth]{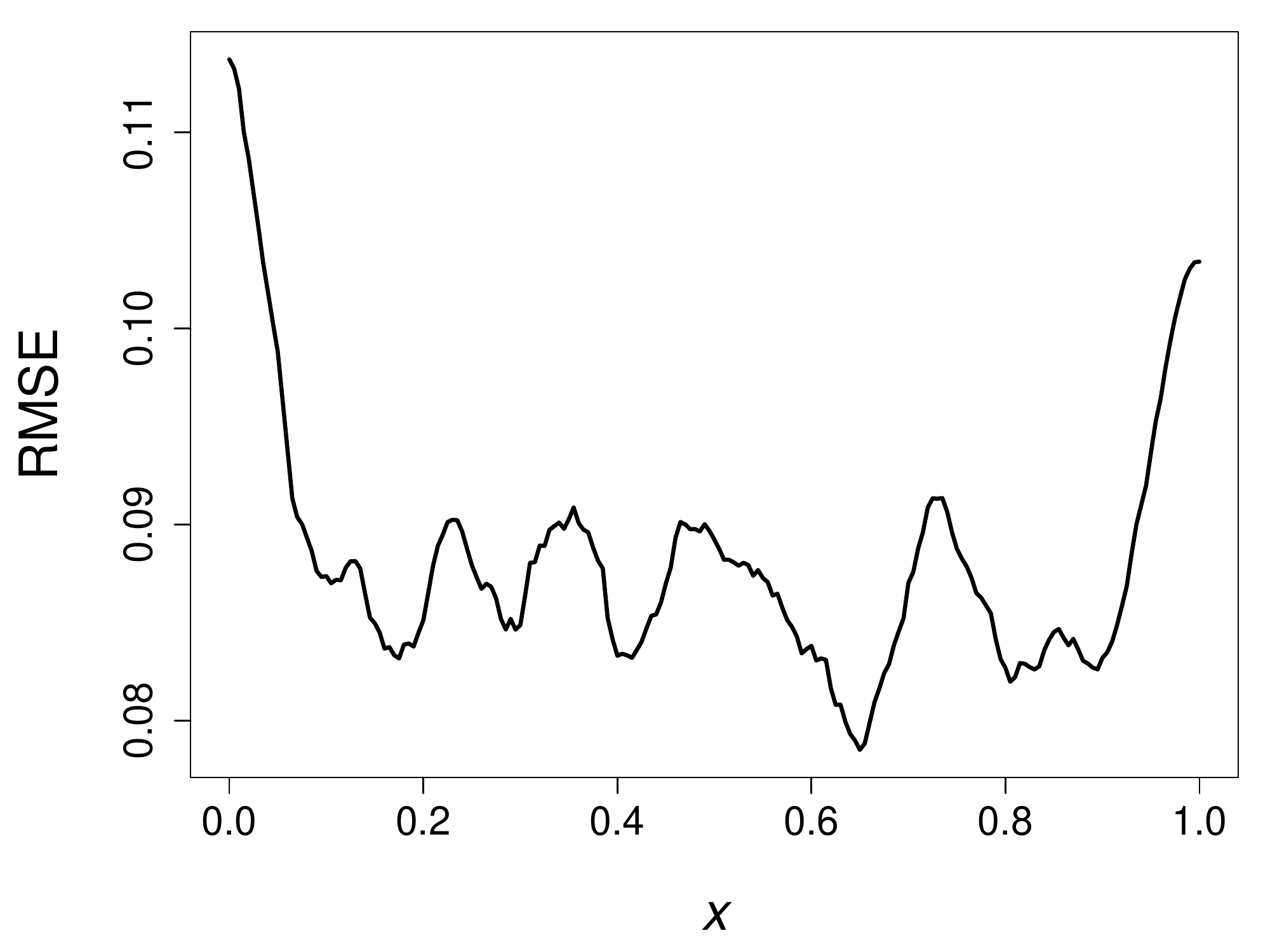}
    \caption{Pointwise RMSE of random forest with $ s = 100 $ and $ m = 1 $.}
          \label{fig:boundary_forest}
          \end{subfigure}
          \hspace{1.5cm}
\begin{subfigure}[t]{0.4\textwidth}
    \centering
    \includegraphics[width=1\textwidth]{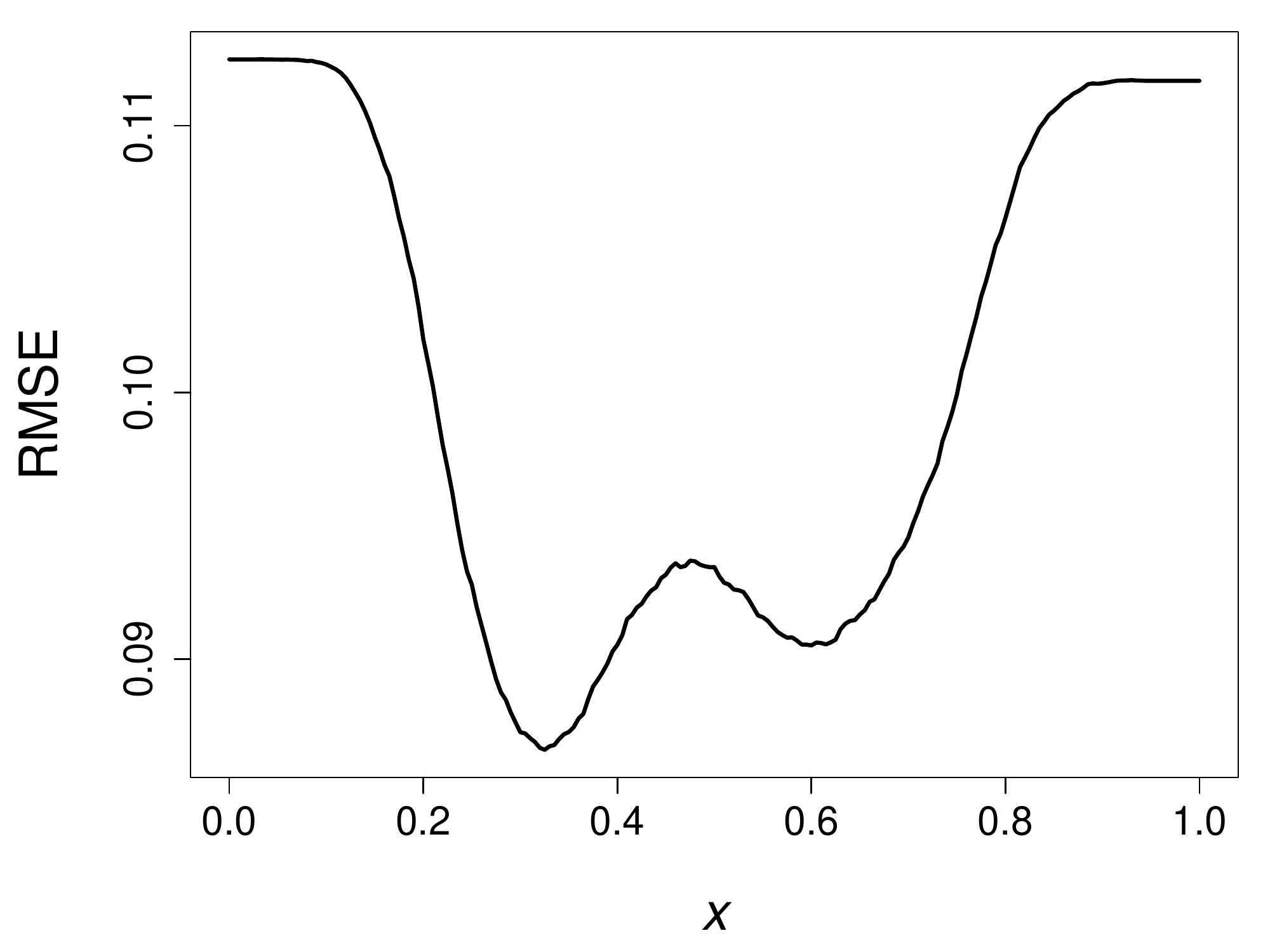}
    \caption{Pointwise RMSE of causal forest with $ s = 100 $ and $ m = 1 $.}
          \label{fig:boundary_forest_causal}
          \end{subfigure}
                    \caption{Pointwise RMSE of random forests for location model.}
                \label{fig:forest}
\end{figure}

In Figure \ref{fig:forest}, we plot the pointwise RMSE of a regression forest and causal forest for the respective models in Section \ref{sec:numerical_stump}, each time using $ B = 2000$ trees and the same sample sizes and number of replications as before. Compared to Figure \ref{fig:boundary} and Figure \ref{fig:boundary_causal}, we see that random forests have considerably better performance than a single tree near the boundary.

When $ p = 1 $, \cite{banerjee2007confidence} and \cite{buhlmann2002analyzing} investigated the properties of decision trees under assumptions that rule out the location model in Assumption \ref{ass:DGP}. They also showed that subsampling can reduce variance, similar to our result in Theorem \ref{thm:forests}. However, because the decision stump exhibits large bias in their setting, one cannot deduce from their results how random forests would improve the pointwise mean square error, which accounts for both bias and variance. Additionally, unlike Theorem \ref{thm:forests}, the random variable selection mechanism was not explored by \cite{banerjee2007confidence} and \cite{buhlmann2002analyzing} because their results are limited to the one-dimensional setting $ p = 1 $. As a consequence, Theorem \ref{thm:forests} complements prior literature by studying the pointwise mean squared error performance of random forest under the the location model with $p\geq 1$, and thus formalizes a beneficial aspect of random feature selection for decision tree ensembles.

Finally, while Theorem \ref{thm:forests} concerns a depth one ($K=1$) random forest construction, it is possible to explore multi-level honest tree ensembles. Theorem \ref{thm:main} showed that shallow honest trees constructed with the CART+ procedure can produce pointwise inconsistent estimates of the regression function $\mu$. In contrast, using the Hoeffding-Serfling variance inequality for U-statistics, it can be shown that an ensemble of depth $ K \asymp \log\log(n) $ trees constructed with CART+ methodology on subsampled data will have pointwise error $ \sqrt{\mathbb{E}[(\hat\mu(\bx) - \mu)^2]} = O(\sigma\sqrt{s/n})$, for all $\bx\in\mathcal{X}$. This result provides a concrete example where an ensemble of shallow inconsistent decision trees can be consistent with nearly optimal convergence rates, and is, to the best of our knowledge, the first time that such a result has been shown  in the literature for practical trees based on CART methodology.

\section{Conclusion}

This article studied the delicate pointwise properties of axis-aligned recursive partitioning, focusing on heterogeneous causal effect estimation, where accurate pointwise estimates over the entire support of the covariates are essential for valid statistical learning (e.g., point estimation, testing hypotheses, confidence interval construction). Specifically, we called into question the use of causal decision trees for such purposes by demonstrating that, for a standard location model, depth one decision trees (e.g., decision stumps) constructed using CART methodology exhibit pointwise convergence rates slower than any polynomial-in-$n$ in boundary regions of the support of the covariates. Even more dramatic, honest shallow decision trees were shown to be inconsistent even in large samples. Pruning was unable to overcome these limitations, but ensemble learning with both subsampling and random feature selection was successful at restoring near-optimal convergence rates for pointwise estimation for the specific simple class of data generating processes that we considered. While our emphasis was on direct use of decision trees for causal effect estimation, the methodological implications are similar for multi-step semi-parametric settings, where preliminary unknown functions (e.g., propensity scores) are estimated with machine learning tools, as well as conditional quantile regression, both of which require estimators with high pointwise accuracy.

In conclusion, our results have important implications for heterogeneous prediction and causal inference learning tasks employing decision trees. Whenever the goal is to produce accurate pointwise regression estimates over the entire support of the conditioning variables, even shallow decision trees trained with a large number of samples can exhibit poor performance. Consequently, adaptive recursive partitioning should be used with caution for heterogeneous prediction or causal inference purposes, especially in high-stakes environments where high pointwise accuracy is crucial.

\section*{Acknowledgements}

The authors thank Jianqing Fan, Max Farrell, Boris Hanin, Joowon Klusowski, Jantje S\"onksen, and Rocio Titiunik for comments.

\section*{Funding}

Cattaneo gratefully acknowledges financial support from the National Science Foundation through SES-1947805, SES-2019432, and SES-2241575. Klusowski gratefully acknowledges financial support from the National Science Foundation through CAREER DMS-2239448, DMS-2054808, and HDR TRIPODS CCF-1934924. Part of this research was conducted by Tian during his doctoral studies at Princeton University, and it constitutes a chapter in his dissertation.

\section*{Disclaimer}

This document is being distributed for informational and educational purposes only and is not an offer to sell or the solicitation of an offer to buy any securities or other instruments. The information contained herein is not intended to provide, and should not be relied upon for, investment advice.   The views expressed herein are not necessarily the views of Two Sigma Investments, LP or any of its affiliates (collectively, “Two Sigma”).  Such views reflect the assumptions of the author(s) of the document and are subject to change without notice. The document may employ data derived from third-party sources. No representation is made by Two Sigma as to the accuracy of such information and the use of such information in no way implies an endorsement of the source of such information or its validity.

The copyrights and/or trademarks in some of the images, logos or other material used herein may be owned by entities other than Two Sigma. If so, such copyrights and/or trademarks are most likely owned by the entity that created the material and are used purely for identification and comment as fair use under international copyright and/or trademark laws. Use of such image, copyright or trademark does not imply any association with such organization (or endorsement of such organization) by Two Sigma, nor vice versa.

\appendix

\section{Proofs}

In this appendix, we include proofs of the formal statements in the main text. Throughout the proofs below, because the quantities of interest are location invariant and homogeneous with respect to scale, by working with the standardized response variable $ (y_i-\mu)/\sigma $, we can assume without loss of generality that $\mu = 0$ and $ \sigma^2 = 1$. 
 
\subsection{Decision Stumps}

In this section, we prove \eqref{eq:split_range} in Theorem \ref{thm:master};
\eqref{eq:rate_constant2} and \eqref{eq:rate_constant} in Theorem \ref{thm:rates}; and \eqref{eq:honest_lower} and \eqref{eq:imse_rate}. Throughout this section, we denote the partial sum by $S_k = y_{1}+\cdots+y_{k}$, for $ k \geq 1$. 
\begin{proof}[Proof of \eqref{eq:split_range} in Theorem \ref{thm:master}]
Fix $ a, b \in (0, 1) $ with $ a < b $. According to \eqref{eq:Delta}, the desired probability is
\begin{equation}
\begin{aligned} \label{eq:master_prob}
& \mathbb{P}\big( n^a \leq \hat\imath \leq n^b\big)  \\ & \quad  = \mathbb{P}\Bigg( \max_{1 \leq k < n} \frac{\Big(\frac{1}{\sqrt{n}}S_k - \frac{k}{n}\frac{1}{\sqrt{n}}S_n \Big)^2}{(k/n)(1-k/n)} > \max_{1 \leq k < n^{a}, \; n^{b} < k < n}\frac{\Big(\frac{1}{\sqrt{n}}S_k - \frac{k}{n}\frac{1}{\sqrt{n}}S_n \Big)^2}{(k/n)(1-k/n)}  \Bigg).
\end{aligned}
\end{equation}
By \citet[Equation A.4.37]{csorgo1997limit}, we can define a sequence of Brownian bridges $ \{ B_n(t) : 0 \leq t \leq 1 \} $ on a suitable probability space such that
\begin{equation} \label{eq:error_full}
\Bigg|\max_{1 \leq k < n} \frac{\Big|\frac{1}{\sqrt{n}}S_k - \frac{k}{n}\frac{1}{\sqrt{n}}S_n \Big|}{\sqrt{(k/n)(1-k/n)}} - \sup_{1/n \leq t \leq 1-1/n}\frac{|B_n(t)|}{\sqrt{t(1-t)}} \Bigg| = \epsilon_n,
\end{equation}
where $ \epsilon_n = o_P\big((\log\log(n))^{-1/2}\big) $.
We note that while \citet[Equation A.4.37]{csorgo1997limit} bounds the approximation error of the maximum over the full range $ 1 \leq k < n $ as in \eqref{eq:error_full}, its proof, which relies on invariance principles for partial sums of i.i.d. random variables, can be generalized to bound the approximation error over $ 1 \leq k < n^{a} $, $ n^{b} < k < n $. Thus,
\begin{equation} \label{eq:error_partial}
\Bigg|\max_{1 \leq k < n^{a}, \; n^{b} < k < n} \frac{\Big|\frac{1}{\sqrt{n}}S_k - \frac{k}{n}\frac{1}{\sqrt{n}}S_n \Big|}{\sqrt{(k/n)(1-k/n)}}- \sup_{1/n \leq t < n^{a-1}, \; n^{b-1} < t \leq 1-1/n}\frac{|B_n(t)|}{\sqrt{t(1-t)}} \Bigg|  = \epsilon_n.
\end{equation}
Combining the approximations \eqref{eq:error_full} and \eqref{eq:error_partial}, the probability \eqref{eq:master_prob} can thus be lower bounded by
$$
\mathbb{P}\Bigg( \sup_{ 1/n \leq t \leq 1-1/n} \frac{|B_n(t)|}{\sqrt{t(1-t)}} > \sup_{1/n \leq t < n^{a-1}, \; n^{b-1} < t \leq 1-1/n} \frac{|B_n(t)|}{\sqrt{t(1-t)}} + 2\epsilon_n\Bigg).
$$
Next, we note that the standardized Brownian bridge $ \big\{ B_n(t)/\sqrt{t(1-t)} : 0 < t < 1 \big\} $ is distributionally equivalent to a time-transformed Ornstein-Uhlenbeck (O-U) process $ \big\{ U(\log(t/(1-t))) : 0 < t < 1\big\} $, where $ \big\{U(t): t \in \mathbb{R}\big\}$ is a zero-mean O-U process \citep[Section 1.9]{csorgo1981strong}, and thus
\begin{equation}
\begin{aligned} \label{eq:prob_orn}
& \mathbb{P}\Bigg( \sup_{ 1/n \leq t \leq 1-1/n} \frac{|B_n(t)|}{\sqrt{t(1-t)}} > \sup_{1/n \leq t < n^{a-1}, \; n^{b-1} < t \leq 1-1/n} \frac{|B_n(t)|}{\sqrt{t(1-t)}} + 2\epsilon_n\Bigg) \\ & = \mathbb{P}\Bigg( \sup_{ -\log(n-1) \leq t \leq \log(n-1)} |U(t)| > \sup_{-\log(n-1) \leq t < \log(n^{a-1}/(1-n^{a-1})), \; \log(n^{b-1}/(1-n^{b-1})) < t \leq \log(n-1)} |U(t)| + 2\epsilon_n\Bigg) \\
& =  \mathbb{P}\Bigg( \sup_{ 0 \leq t \leq 2\log(n-1)} |U(t)| > \sup_{ 0 \leq t < \log(n^{a-1}(n-1)/(1-n^{a-1})), \; \log(n^{b-1}(n-1)/(1-n^{b-1}))  < t \leq 2\log(n-1)} |U(t)| + 2\epsilon_n \Bigg) \\
& = \mathbb{P}\Bigg( \sup_{ 0 \leq t \leq 2\log(n-1)} |U(t)| > \sup_{ 0 \leq t \leq \log( (n-1)^2n^{a-b}(1-n^{b-1})/(1-n^{a-1})) } |U(t)| + 2\epsilon_n \Bigg),
\end{aligned}
\end{equation}
where the last two equalities result from, respectively, stationarity and the Markov property of the process $ |U(t)| $, the square of which is a Cox-Ingersoll-Ross (CIR) process \citep{jaeschke2003survey}.

By the Darling-Erd\H{o}s Limit Theorem for the O-U process \citep[Theorem A.3.1]{csorgo1997limit} and \citep[Theorem 2.2]{eicker1979asymptotic}, for all $ c > 0 $ and $ z \in \mathbb{R} $, we have
\begin{equation}
\begin{aligned}\label{eq:darling-erdos}
& \lim_{n\to\infty} \mathbb{P}\Bigg(\sup_{ 0 \leq t \leq (c+o(1))\log(n)}|U(t)| <  \frac{2\log\log(n) + (1/2)\log\log\log(n) + z - (1/2)\log(\pi)}{\sqrt{2\log\log(n)}}  \Bigg) \\ & \qquad\qquad\qquad =
 e^{-e^{-(z-2\log(c))}}.
 \end{aligned}
\end{equation}
Let $ z^* $ maximize $ z \mapsto e^{-e^{-(z-2\log(2-(b-a)))}} - e^{-e^{-(z-2\log(2))}} $. Simple calculus yields $$ z^* = \log\Bigg(\frac{v(4-v)}{\log(4/(2-v)^2)}\Bigg), \qquad v = b-a. $$ Define $$ u_n = \frac{2\log\log(n) + (1/2)\log\log\log(n) + z^* - (1/2)\log(\pi)}{\sqrt{2\log\log(n)}}. $$
Continuing from \eqref{eq:prob_orn}, using \eqref{eq:darling-erdos} twice with $ c = 2 $ and $ c = 2-(b-a) $ and the fact that $ \epsilon_n = o_P\big((\log\log(n))^{-1/2}\big) $, we obtain
\begin{equation} 
\begin{aligned} \label{eq:prob_lower}
& \liminf_{n\to\infty} \mathbb{P}\Bigg( \sup_{ 0 \leq t \leq 2\log(n-1)} |U(t)| > \sup_{ 0 \leq t \leq \log( (n-1)^2n^{a-b}(1-n^{b-1})/(1-n^{a-1})) } |U(t)| + 2\epsilon_n \Bigg) \\ & \qquad \geq \liminf_{n\to\infty} \mathbb{P}\Bigg( \sup_{ 0 \leq t \leq 2\log(n-1)} |U(t)| \geq u_n, \quad \sup_{ 0 \leq t \leq \log( (n-1)^2n^{a-b}(1-n^{b-1})/(1-n^{a-1})) } |U(t)| < u_n - 2\epsilon_n \Bigg) \\ &
 \qquad \geq
  \lim_{n\to\infty}  \mathbb{P}\Bigg( \sup_{ 0 \leq t \leq \log( (n-1)^2n^{a-b}(1-n^{b-1})/(1-n^{a-1})) } |U(t)| < u_n - 2\epsilon_n \Bigg) - \lim_{n\to\infty}\mathbb{P}\Bigg( \sup_{ 0 \leq t \leq 2\log(n-1)} |U(t)| < u_n \Bigg)
  \\ & \qquad
 = \lim_{n\to\infty} \Big( e^{-e^{-(z^*+o(1)-2\log(2-(b-a)))}} + o(1) \Big) - e^{-e^{-(z^*-2\log(2))}} \\ & \qquad =
 e^{-e^{-(z^*-2\log(2-(b-a)))}} - e^{-e^{-(z^*-2\log(2))}} \\ & \qquad =
v \frac{(4-v)(1-v/2)^{8/(v(4-v))}}{(2-v)^2}
\\ & \qquad \geq v \cdot \lim_{u \downarrow 0}\frac{(4-u)(1-u/2)^{8/(u(4-u))}}{(2-u)^2}
\\ & \qquad = (b-a)/e.
\end{aligned}
\end{equation}
We have thus shown that $ \liminf_{n\to\infty} \mathbb{P}( n^a \leq \hat\imath \leq n^b) \geq (b-a)/e $. By symmetry, we obtain $ \liminf_{n\to\infty} \mathbb{P}(n-n^b \leq \hat\imath \leq n- n^a) \geq (b-a)/e $, and by disjointness of the events $ n^a \leq \hat\imath \leq n^b $ and $ n-n^b \leq \hat\imath \leq n- n^a $ when $ n > 2^{1/(1-b)} $, we also have
\begin{equation*}
\liminf_{n\to\infty} \mathbb{P}( n^a \leq \hat\imath \leq n^b \;\; \text{or} \;\; n-n^b \leq \hat\imath \leq n- n^a) \geq (2/e)(b-a). \qedhere
\end{equation*}

\begin{remark}
Alternatively, for any $ 0 \leq A < B $, we have
\begin{equation} \label{eq:max-ou}
\mathbb{P}\Bigg( \sup_{0 \leq t \leq B} |U(t)| > \sup_{0 \leq t \leq A} |U(t)| \Bigg) = \frac{B-A}{B}.
\end{equation}
This can readily be shown using the fact that the absolute value of a zero-mean O-U process is stationary, Markov, and has continuous paths. Consequently, ignoring the stochastic error $\epsilon_n$ from approximating the impurity gain \eqref{eq:Delta} by the square of a standardized Brownian bridge (not yet justified), using \eqref{eq:max-ou}, we can approximate the probability \eqref{eq:master_prob} by
\begin{align}
& \mathbb{P}\Bigg( \sup_{ 0 \leq t \leq 2\log(n-1)} |U(t)| > \sup_{ 0 \leq t \leq \log( (n-1)^2n^{a-b}(1-n^{b-1})/(1-n^{a-1})) } |U(t)| \Bigg) 
\\ & \qquad = \frac{2\log(n-1)-\log( (n-1)^2n^{a-b}(1-n^{b-1})/(1-n^{a-1}))}{2\log(n-1)} \rightarrow \frac{b-a}{2}, \quad n \rightarrow \infty.
\end{align}
We are therefore led to conjecture that
$$
\lim_{n\to\infty} \mathbb{P}( n^a \leq \hat\imath \leq n^b) = \frac{b-a}{2}.
$$
\end{remark}
\end{proof}

\begin{proof}[Proof of \eqref{eq:rate_constant2} and \eqref{eq:rate_constant} in Theorem \ref{thm:rates}]

By \citet[Theorem A.4.1]{csorgo1997limit} and Donsker's Theorem which says that $ \max_{n/2 < k < n}|S_k|/\sqrt{k} $ and $ \max_{1 \leq k \leq n/2}|S_n-S_k|/\sqrt{n-k} $ converge in distribution to $ \sup_{0\leq t \leq \log(2)}|U(t)| $, we have
\begin{equation} \label{eq:LIL}
\frac{\max_{1 \leq k < n}\frac{|S_k|}{\sqrt{k}}}{\sqrt{2\log\log(n)}} = 1 + o_P(1), \qquad \frac{\max_{n/2 < k < n}\frac{|S_k|}{\sqrt{k}}+\max_{1 \leq k \leq n/2}\frac{|S_n-S_k|}{\sqrt{n-k}}}{\sqrt{2\log\log(n)}} = o_P(1).
\end{equation}
To prove \eqref{eq:rate_constant2}, we note that, on the event $ \hat\imath \leq n^b $ or $ \hat\imath \geq n-n^b $ which occurs with asymptotic probability at least $ 2b/e $, we have
\begin{align}
\sup_{x \in \mathcal{X}} |\hat \mu(T_1)(x)|^2 
&\geq 
\frac{S_{\hat\imath}^2}{\hat\imath^2}\Indicator(\hat\imath \leq n/2) +
\frac{(S_n - S_{\hat\imath})^2}{(n-\hat\imath)^2}\Indicator(\hat\imath > n/2)\\
&\geq 
\frac{1}{\min\{\hat\imath, \, n - \hat\imath\}}\Bigg(
\frac{S_{\hat\imath}^2}{\hat\imath} +
\frac{(S_n - S_{\hat\imath})^2}{n-\hat\imath}
- \Bigg(\frac{S_{\hat\imath}^2}{\hat\imath} \Indicator(\hat\imath > n/2) +
\frac{(S_n - S_{\hat\imath})^2}{n-\hat\imath} \Indicator(\hat\imath \leq n/2)\Bigg)\Bigg)\\
& \geq \frac{1}{\min\{\hat\imath, \, n - \hat\imath\}}\Bigg(\max_{1 \leq k < n}\Bigg(\frac{S_k^2}{k}+\frac{(S_n-S_k)^2}{n-k}\Bigg) - \max_{n/2 < k < n} \frac{S^2_k}{k} - \max_{1 \leq k \leq n/2}\frac{(S_n-S_k)^2}{n-k}\Bigg) \\
& \geq  \frac{(2+o_P(1))\log\log(n)}{\min\{\hat\imath, \, n - \hat\imath\}} \\
&\geq \frac{(2+o_P(1))\log\log(n)}{n^b}.
\end{align}
Here we have used the fact that $ \hat\imath $ equivalently maximizes $ k \mapsto S^2_k/k + (S_n-S_k)^2/(n-k) = \sum_{i=1}^n(y_i-\mu)^2 - \sum_{i = 1}^n(y_i - (S_k/k) \Indicator(i \leq k) - ((S_n-S_k)/(n-k))\Indicator(i > k))^2 $ over $ 1 \leq k < n $.

The other result \eqref{eq:rate_constant} follows again from \eqref{eq:LIL} and from the fact that $ x_{(n^a)}/n^{a-1} = 1 + o_P(1) $ and $ x_{(n-n^a)}/(1-n^{a-1}) = 1+o_P(1) $. Thus, on the event $ n^a \leq \hat\imath \leq n^b $ which occurs with asymptotic probability at least $(b-a)/e$, if $ x_n \leq (1+o_P(1))n^{a-1} = x_{(n^a)} \leq x_{(\hat\imath)} = \hat\tau $, we have
\begin{align}
|\hat \mu(T_1)(x_n)|^2 = \frac{S_{\hat\imath}^2}{\hat\imath^2}
& =
\frac{1}{\hat\imath}\Bigg(\frac{S_{\hat\imath}^2}{\hat\imath} +
\frac{(S_n - S_{\hat\imath})^2}{n-\hat\imath} -  \frac{(S_n - S_{\hat\imath})^2}{n-\hat\imath}\Bigg)\\
& \geq \frac{1}{\hat\imath}\Bigg(\max_{1 \leq k < n}\Bigg(\frac{S_k^2}{k}+\frac{(S_n-S_k)^2}{n-k}\Bigg) - \max_{1 \leq k \leq n^b}\frac{(S_n-S_k)^2}{n-k}\Bigg) \\
& =  \frac{(2+o_P(1))\log\log(n)}{\hat\imath}\\
&\geq \frac{(2+o_P(1))\log\log(n)}{n^b}.
\end{align}
By symmetry, on the event $ n-n^b \leq \hat\imath \leq n-n^a $, the same lower bound holds for $x_n > 1 - (1+o_P(1))n^{a-1}$.
\end{proof}

\begin{proof}[Proof of \eqref{eq:honest_lower}]
We first observe that
\begin{align}
\mathbb{E}\Big[\sup_{x\in\mathcal{X}}(\tilde\mu(T_1)(x))^2\Big] & \geq \text{Var}(\tilde \mu(T_1)(0)) \\ & = 
\mathbb{E}\Bigg[\Bigg(\frac{\Indicator(\#\{\tilde x_i \leq x_{(\hat\imath)}\} > 0)}{\#\{\tilde x_i \leq x_{(\hat\imath)}\}}\sum_{i=1}^n \tilde y_i \Indicator(\tilde x_i \leq x_{(\hat\imath)})\Bigg)^2 \Bigg] \\
& = \mathbb{E}\Bigg[\frac{\Indicator(\#\{\tilde x_i \leq x_{(\hat\imath)}\} > 0)}{\#\{\tilde x_i \leq x_{(\hat\imath)}\}}\Bigg],
\end{align}
where we used the independence between $\tilde y_i $ and $ \tilde x_i  $ and $ x_i $, per the honest construction and Assumption \ref{ass:DGP}.
By the Cauchy-Schwarz inequality, we have
\begin{equation} \label{eq:cauchy}
\mathbb{E}\Bigg[\frac{\Indicator(\#\{\tilde x_i \leq x_{(\hat\imath)}\} > 0)}{\#\{\tilde x_i \leq x_{(\hat\imath)}\}}\Bigg] \geq \mathbb{E}\Bigg[\frac{(\mathbb{P}(\#\{\tilde x_i \leq x_{(\hat\imath)}\} > 0 \mid \hat\imath))^2}{\mathbb{E}[\#\{\tilde x_i \leq x_{(\hat\imath)}\}\mid \hat\imath]}\Bigg].
\end{equation}
Again, by the honest construction and Assumption \ref{ass:DGP}, we note that $ \tilde x_i  $, $ x_i $, and $\hat\imath $ are mutually independent. In particular, $ x_{(\hat\imath)} $ given $ \hat\imath = i $ is distributed $ \text{Beta}(i,\, n-i+1) $, allowing us to compute
\begin{align}
\mathbb{P}\big(\#\{\tilde x_i \leq x_{(\hat\imath)}\} > 0 \mid \hat\imath \big) & = 1 - \mathbb{E}[(1-x_{(\hat\imath)})^n \mid \hat\imath] \\ & = 1 - \binom{2n-\hat\imath}{n}/\binom{2n}{n} \\ & = 1 -  \prod_{i=1}^{\hat\imath}\frac{n-i+1}{2n-i+1} \\ & \geq 1- 2^{-\hat\imath},
\end{align}
and
$$
\mathbb{E}[\#\{\tilde x_i \leq x_{(\hat\imath)}\}\mid \hat\imath] = \frac{n}{n+1}\hat\imath.
$$
We may thus lower bound \eqref{eq:cauchy} via
$$
\mathbb{E}\Bigg[\frac{\Big(1-\binom{2n-\hat\imath}{n}/\binom{2n}{n}\Big)^2}{n\hat\imath/(n+1)}\Bigg] \geq \mathbb{E}\Bigg[\frac{(1-2^{-\hat\imath})^2}{\hat\imath}\Bigg].
$$
The fact that $ \mathbb{E}\Big[\frac{(1-2^{-\hat\imath})^2}{\hat\imath}\Big] \gtrsim n^{-b} $ follows directly from \eqref{eq:split_range}.
\end{proof}

\subsection{Inconsistency with Deeper Trees}
In this section, we prove Theorem \ref{thm:main}. First, we define some notation related to the tree construction which will be used in the proofs. Let $\tilde n_k$ be the number of observations in the left-most cell (i.e., the node containing $x = 0$) at depth $k$ and $\tilde\imath_{k} $ be the CART split index of this node, with $\tilde n_0 = n$ and $\tilde\imath_0 = \hat\imath$ (recall that $ \hat\imath $ is the split index for the decision stump \eqref{eq:stump}). Then, the left-most cell at the $k$-th level can be expressed as $ [0, x_{(\tilde\imath_{k-1})}] $ and $ \tilde n_k = \tilde \imath_{k-1} = \#\{ x_i \leq x_{(\tilde\imath_{k-1})}\} $.

\begin{lemma} \label{lmm:recursion}
There exist $ \delta \in (0, 1) $, $c > 1$, and a positive integer $M$ such that for any depth $k\geq 1$ and $m\geq M$, we have
$
\mathbb{P}(\tilde n_k \leq m) \geq  (1-\delta)\cdot \mathbb{P}(\tilde n_{k-1}\leq m) + \delta \cdot \mathbb{P}(\tilde n_{k-1}\leq m^c).
$
\end{lemma}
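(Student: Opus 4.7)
The plan is to derive the recursion by conditioning on $\tilde n_{k-1}$ and invoking Theorem \ref{thm:master} at the conditional level. First I would fix constants $c>1$, $b=1/c$, and some $a\in(0,b)$. Theorem \ref{thm:master} applied with sample size $N$ yields $\liminf_{N\to\infty}\mathbb{P}(N^a\leq \hat\imath\leq N^b)\geq (b-a)/e$, and I would choose a positive integer $M$ large enough that this probability exceeds $\delta:=(b-a)/(2e)$ for every $N\geq M$.

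Second, I would argue that, conditional on $\tilde n_{k-1}=N\geq M$, the random variable $\tilde n_k=\tilde\imath_{k-1}$ has the same distribution as the CART root-node split index of a fresh i.i.d.\ sample of size $N$ from the model in Assumption \ref{ass:DGP}. The key observation is that the impurity gain \eqref{eq:Delta}, and hence the split index, depends only on $N$ and on the responses sorted by their covariates, not on the covariate values themselves. Under CART+, the depth-$k$ responses $\tilde y_i$ are statistically independent of the $x_i$'s and of all ancestor split points, so by the independence between $\tilde y_i$ and $x_i$ in Assumption \ref{ass:DGP}, the sorted responses in the parent cell are exchangeable and jointly distributed as $N$ i.i.d.\ draws from the noise distribution. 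This is exactly the setting to which Theorem \ref{thm:master} applies.

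With these pieces in place, the recursion follows from a short conditioning argument. Since $\tilde n_k\leq \tilde n_{k-1}$ deterministically,
$$
\mathbb{P}(\tilde n_k\leq m) = \mathbb{P}(\tilde n_{k-1}\leq m) + \mathbb{E}\bigl[\mathbb{P}(\tilde n_k\leq m\mid \tilde n_{k-1})\,\Indicator(\tilde n_{k-1}>m)\bigr].
$$
On the event $\{m<\tilde n_{k-1}\leq m^c\}$ I have $\tilde n_{k-1}>m\geq M$, and by the preceding paragraph the conditional probability of $\{\tilde n_k\leq \tilde n_{k-1}^b\}$ is at least $\delta$. Because $bc=1$ forces $\tilde n_{k-1}^b\leq m^{bc}=m$ on this event, the expectation above is bounded below by $\delta\cdot\mathbb{P}(m<\tilde n_{k-1}\leq m^c)=\delta\bigl[\mathbb{P}(\tilde n_{k-1}\leq m^c)-\mathbb{P}(\tilde n_{k-1}\leq m)\bigr]$, which rearranges to the claimed inequality.

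The main obstacle I anticipate is the careful justification of the second paragraph: one must verify that the CART+ Markov property indeed decouples the depth-$k$ split statistics from all ancestor structure, reducing the conditional problem in a random-shape subcell to the unconditional stump setting of Theorem \ref{thm:master}. If this reduction is not exact, the uniform lower bound $\delta$ may fail to transfer to deep subnodes, and the constants $\delta$, $c$, and $M$ would then depend on $k$, which would break the iteration envisaged in the proof of Theorem \ref{thm:main}.
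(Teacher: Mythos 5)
Your proposal is correct and follows essentially the same route as the paper's proof: both condition on $\tilde n_{k-1}$, use the Markovian property of CART+ together with Assumption \ref{ass:DGP} to reduce $\tilde\imath_{k-1}\mid\tilde n_{k-1}=N$ to the root-node stump setting of Theorem \ref{thm:master} so that $\mathbb{P}(N^a\leq\tilde\imath_{k-1}\leq N^b\mid\tilde n_{k-1}=N)\geq\delta$ uniformly in $N\geq M$, take $c=1/b$ so that $N^b\leq m$ on $\{m<\tilde n_{k-1}\leq m^c\}$, and split $\{\tilde n_k\leq m\}$ according to whether $\tilde n_{k-1}\leq m$ or $m<\tilde n_{k-1}\leq m^c$ (the monotonicity $\tilde n_k\leq\tilde n_{k-1}$ handling the first piece). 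The reduction you flag as the main obstacle is exactly the step the paper also relies on without elaboration, so your concern is reasonable but does not indicate a divergence in approach.
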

\begin{proof}
Observe that if $m$ is a positive integer, then $\tilde\imath_{k-1} \mid \tilde n_{k-1} = m$ has the same distribution as $\tilde\imath_0 \mid \tilde n_0 = m$, because of the honest tree construction and Assumption \ref{ass:DGP}.
Therefore, we can apply \eqref{eq:split_range} to obtain
\begin{equation} \label{eq:conditional_split}
\mathbb{P}\big( m^a \leq \tilde\imath_{k-1} \leq 
m^b \mid \tilde n_{k-1} = m\big) \geq \delta > 0,
\end{equation}
for some $ \delta > 0 $ and sufficiently large $ m $. Hence, by \eqref{eq:conditional_split}, we have for $ m $ sufficiently large,
\begin{equation} \label{eq:cond}
\begin{aligned}
& \mathbb{P}\big(\tilde n_k  \leq m \mid m < \tilde n_{k-1} \leq m^{1/b} \big) \\ &
\quad \geq \min_{m < i \leq m^{1/b} } \mathbb{P}\big(i^a \leq \tilde\imath_{k-1}\leq i^b \mid \tilde n_{k-1} = i\big)\mathbb{P}\big(\tilde n_k  \leq m \mid i^a \leq \tilde\imath_{k-1} \leq i^b \big) \\
& \quad \geq \delta \min_{m < i \leq m^{1/b} }  \mathbb{P}\big(\tilde n_k  \leq m \mid i^a \leq \tilde\imath_{k-1} \leq i^b \big) \\
& \quad \geq \delta\min_{m^a < i \leq m } \mathbb{P} \big(\tilde n_k  \leq \tilde\imath_{k-1} \mid \tilde\imath_{k-1}  = i \big) \\
& \quad = \delta.
\end{aligned}
\end{equation}
Now, taking $ c = 1/b $, note that
\eqref{eq:cond} implies Lemma \ref{lmm:recursion} since, for $ m $ sufficiently large, we have
\begin{align}
\mathbb{P}(\tilde n_k \leq m)
& = \mathbb{P}(\tilde n_k \leq m, \; \tilde n_{k-1} > m^c) + \mathbb{P}(\tilde n_k \leq m, \; \tilde n_{k-1} \leq m^c) \\
& \geq \mathbb{P}(\tilde n_k \leq m, \; \tilde n_{k-1} \leq m^c) \\
& = \mathbb{P}(\tilde n_k \leq m, \; \tilde n_{k-1} \leq m) + \mathbb{P}(\tilde n_k \leq m, \; m < \tilde n_{k-1} \leq m^c) \\
& \geq \mathbb{P}(\tilde n_{k-1}\leq m)+ \delta\cdot\mathbb{P}(m < \tilde n_{k-1}\leq m^c) \\
& = (1-\delta)\cdot \mathbb{P}(\tilde n_{k-1}\leq m)+ \delta\cdot\mathbb{P}(\tilde n_{k-1}\leq m^c). \qedhere
\end{align}
\end{proof}
Next, we use Lemma \ref{lmm:recursion} to finish the proof of Theorem \ref{thm:main}. The main idea is to establish that the terminal nodes in a shallow tree will be small with constant probability.
\begin{proof}[Proof of Theorem \ref{thm:main}]
Define $n_\ell = n^{(1/c)^{\ell}}$.
We will show by induction that for any $k \geq 0$ and $\ell \geq 1$ such that $n_\ell \geq M$,
\begin{equation} \label{eq:induction}
\mathbb{P}(\tilde n_k \leq n_{\ell}) \geq
\sum_{k'=\ell}^{k} {k'-1 \choose \ell-1} (1-\delta)^{k'-\ell} \delta^\ell.
\end{equation}
The base case of $k = 0$ is trivial since $\tilde n_0 = n$.
Now, assume that for some fixed $k \geq 1$ and any $\ell' \geq 1$ such that $n_{\ell'} \geq M$, we have
\begin{equation} \label{eq:hypothesis}
\mathbb{P}(\tilde n_{k-1}\leq n_{\ell'}) \geq
\sum_{k'=\ell'}^{k-1} {k'-1 \choose \ell'-1} (1-\delta)^{k'-\ell'} \delta^{\ell'}.
\end{equation}
If $\ell \geq 2$, then substituting our induction hypothesis \eqref{eq:hypothesis} with $\ell' = \ell$ and $\ell'= \ell-1$ into Lemma \ref{lmm:recursion}, we get that
\begin{align}
\mathbb{P}(\tilde n_k \leq n_\ell)
& \geq (1-\delta) \sum_{k'=\ell}^{k-1} {k'-1 \choose \ell-1} (1-\delta)^{k'-\ell} \delta^\ell + \delta \sum_{k'=\ell-1}^{k-1} {k'-1 \choose \ell-2} (1-\delta)^{k'-\ell+1}\delta^{\ell-1} \\
& =\sum_{k'=\ell}^k {k'-1 \choose \ell-1}(1-\delta)^{k'-\ell} \delta^\ell,
\end{align}
where we used Pascal's identity.
This completes the inductive proof of \eqref{eq:induction}.

Let $X \sim \text{NB}(L, \delta)$, i.e., the number of independent trials, each occurring with probability $\delta$, until $L$ successes. Choose
 $$
L = \lceil \log_c\log_c(n) - \log_c\log_c(M)-1\rceil \asymp \log\log(n), \quad n_L =   n^{(1/c)^L} \in [M,M^c].
 $$
By \eqref{eq:induction} and Markov's inequality applied to the tail probability of $ X $, we have that
\begin{equation}
 \begin{aligned} \label{eq:terminal}
 \mathbb{P}(\tilde n_K \leq n_L)  & \geq
 \sum_{k'=L}^K {k'-1 \choose L-1} (1-\delta)^{k'-L}\delta^L \\ & = 1 - \mathbb{P}(X \geq K + 1) \\ & \geq 1- \frac{\mathbb{E}[X]}{K+1} \\ & =  1 - \frac{L}{\delta(K+1)} \\ & \geq \frac{1}{2},
 \end{aligned} 
 \end{equation}
as long as $ K \geq 2L/\delta \gtrsim \log\log(n)$. By the Paley-Zygmund inequality \citep{petrov2007lower} and the fact that $ \text{Var}(\tilde \mu(T_{K})(0)) = \mathbb{E}[1/\tilde n_K] \leq 1 $,
we have
\begin{equation} \label{eq:paley}
\mathbb{P}\Bigg(|\tilde \mu(T_{K})(0)| > \frac{\mathbb{E}[|\tilde \mu(T_{K})(0)|]}{2} \Bigg) \geq \frac{(\mathbb{E}[|\tilde \mu(T_{K})(0)|])^2}{4\text{Var}(\tilde \mu(T_{K})(0))} \geq \frac{(\mathbb{E}[|\tilde \mu(T_{K})(0)|])^2}{4} . 
\end{equation}
By the honest construction of the tree and \eqref{eq:terminal}, we have the lower bound
\begin{equation}
\begin{aligned} \label{eq:expect_paley}
\mathbb{E}[|\tilde \mu(T_{K})(0)|] & = \sum_{k=1}^n \mathbb{E}\Bigg[\Bigg|\frac{1}{k}\sum_{i=1}^k \tilde y_i \Bigg|\Bigg]\mathbb{P}(  \tilde n_K = k) \\
&  \geq \min_{ k = 1, 2, \dots, \lceil n_L \rceil }\mathbb{E}\Bigg[\Bigg|\frac{1}{k}\sum_{i=1}^k \tilde y_i \Bigg|\Bigg]\mathbb{P}(  \tilde n_K \leq \lceil n_L \rceil) \\ & \geq \frac{1}{2}\min_{ k = 1, 2, \dots, \lceil n_L \rceil}\mathbb{E}\Bigg[\Bigg|\frac{1}{k}\sum_{i=1}^k \tilde y_i \Bigg|\Bigg].
\end{aligned}
\end{equation}
Since a sum of independent random variables is almost surely constant if and only if the individual random variables are almost surely constant, it follows that the last expression in \eqref{eq:expect_paley} is bounded away from zero. Returning to \eqref{eq:paley} completes the proof.
\end{proof}

\begin{proof}[Proof of \eqref{eq:imse_rate}]
Let $0 = \tilde \imath_0 < \tilde \imath_1 \leq \cdots \leq \tilde \imath_{2^K-1} < \tilde\imath_{2^K} = n $ and $ 0 = \tilde\tau_0 < \tilde \tau_1 \leq \cdots \leq \tilde \tau_{2^K-1} < \tilde\tau_{2^K} = 1 $ denote the successive splits indices and values, respectively, at the terminal level of the tree (if a node cannot be further refined, we duplicate the split indices and values at the next level). Note that the split indices are independent of the $\tilde y_i $ data by the honest condition and the $x_i $ data per Assumption \ref{ass:DGP}. In particular, note that $ \tilde\tau_k = x_{(\tilde\imath_k)} $ given $ \tilde\imath_k = i $ is distributed $ \text{Beta}(i,\, n-i+1) $.  Thus, the IMSE can be bounded as follows:
\begin{align}
\mathbb{E}\Bigg[\int_{\mathcal{X}}(\tilde \mu(T_K)(x))^2 \mathbb{P}_x(dx)\Bigg] & =
\sum_{k=1}^{2^K}\mathbb{E}\Bigg[(\tilde\tau_k - \tilde\tau_{k-1})\Bigg(\frac{\Indicator(\tilde\imath_k > \tilde\imath_{k-1})}{\tilde\imath_k-\tilde\imath_{k-1}}\sum_{i=1}^n \tilde y_i\Indicator(\tilde\tau_{k-1} \leq x_i < \tilde\tau_k)\Bigg)^2\Bigg]
 \\ & =
\mathbb{E}\Bigg[\frac{\tilde\tau_{1}}{\tilde\imath_{1}}\Bigg] +  \sum_{k=2}^{2^K-1}\mathbb{E}\Bigg[\frac{\tilde\tau_k-\tilde\tau_{k-1}}{\tilde\imath_k - \tilde\imath_{k-1}}\Indicator(\tilde\imath_k > \tilde\imath_{k-1})\Bigg] + \mathbb{E}\Bigg[\frac{1-\tilde\tau_{2^K-1}}{n- \tilde\imath_{2^K-1}}\Bigg]\\
& \leq  \mathbb{E}\Bigg[\frac{1}{n+1}\Bigg] +  \sum_{k=2}^{2^K-1}\mathbb{E}\Bigg[\frac{1}{n+1}\Bigg] + \mathbb{E}\Bigg[\frac{1}{n+1}\frac{n- \tilde\imath_{2^K-1}+1}{n- \tilde\imath_{2^K-1}}\Bigg] \\
& \leq \frac{2^{K+1}}{n+1}.
\qedhere
\end{align}
\end{proof}

\subsection{Random Forests}
In this section, we prove Theorem \ref{thm:forests}. The following lemmas will be helpful.
\begin{lemma} \label{lmm:bin}
If $W \sim \text{Bin}(w, r)$, where $w\in \mathbb{N} $ and $ r \in (0, 1] $, then
$\mathbb{E}\big[\frac{1}{W+1}\big] \leq \frac{1}{(w+1)r}$.
\end{lemma}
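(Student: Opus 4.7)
The plan is to prove this classical inequality by direct computation, exploiting the binomial identity $\frac{1}{k+1}\binom{w}{k} = \frac{1}{w+1}\binom{w+1}{k+1}$ to shift the expectation into a form that collapses via the binomial theorem.

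First, I would write the expectation explicitly as
\[
\mathbb{E}\Big[\tfrac{1}{W+1}\Big] = \sum_{k=0}^{w} \frac{1}{k+1}\binom{w}{k} r^{k}(1-r)^{w-k}.
\]
Applying the identity above pulls out a factor of $\frac{1}{w+1}$ and turns $\binom{w}{k}$ into $\binom{w+1}{k+1}$.

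Next, I would multiply and divide by $r$ to re-index the sum: the $r^{k}$ becomes $r^{k+1}/r$, so setting $j = k+1$ yields
\[
\mathbb{E}\Big[\tfrac{1}{W+1}\Big] = \frac{1}{(w+1)r}\sum_{j=1}^{w+1}\binom{w+1}{j} r^{j}(1-r)^{w+1-j}.
\]
The sum is the full binomial expansion of $(r + (1-r))^{w+1} = 1$ missing only the $j=0$ term $(1-r)^{w+1}$, so it equals $1 - (1-r)^{w+1}$. Thus
\[
\mathbb{E}\Big[\tfrac{1}{W+1}\Big] = \frac{1-(1-r)^{w+1}}{(w+1)r} \leq \frac{1}{(w+1)r},
\]
where the final bound uses $(1-r)^{w+1} \geq 0$ (which is valid since $r \in (0,1]$).

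There is no real obstacle here; the only point requiring care is the initial combinatorial identity $\frac{1}{k+1}\binom{w}{k} = \frac{1}{w+1}\binom{w+1}{k+1}$, which follows immediately by expanding both sides in factorials. The case $r = 0$ is excluded by hypothesis (the bound would be vacuous), and $W \equiv 0$ when this would otherwise cause division by zero issues does not arise.
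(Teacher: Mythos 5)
Your proof is correct and takes essentially the same approach as the paper: both use the identity $\frac{1}{k+1}\binom{w}{k} = \frac{1}{w+1}\binom{w+1}{k+1}$, reindex the sum, and bound the resulting partial binomial expansion by $1$. Your version is slightly more explicit in identifying the missing $j=0$ term $(1-r)^{w+1}$, which yields the exact value $\frac{1-(1-r)^{w+1}}{(w+1)r}$ before passing to the inequality, but the argument is the same.
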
 
\begin{proof}
We have
\begin{align}
\mathbb{E}\Bigg[\frac{1}{W+1}\Bigg] & = \sum_{i=0}^w \frac{1}{i+1} {w \choose i}r^i(1-r)^{w-i}
= \frac{1}{(w+1)r}\sum_{i=1}^{w+1} {w+1 \choose i}r^{i}(1-r)^{w+1-i} 
\leq \frac{1}{(w+1)r}.
\qedhere
\end{align}
\end{proof}

\begin{lemma} \label{lmm:cap}
Let $ m $ and $ a $ be positive integers and $A$ and $A'$ be two independent random subsets of $\{1, 2,\dots,m\}$ of size $a$. Then,
$
\frac{1}{{m \choose a}^2} \sum_{A,A'} |A \cap A'| = \frac{a^2}{m}.
$
\end{lemma}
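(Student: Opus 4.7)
The plan is to recognize the left-hand side as the expectation $\mathbb{E}[|A \cap A'|]$ under the uniform distribution on pairs of $a$-subsets, and then compute it by the standard indicator/linearity-of-expectation trick. Specifically, I would write
\begin{equation}
|A \cap A'| = \sum_{i=1}^{m} \Indicator(i \in A)\,\Indicator(i \in A').
\end{equation}

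Next, I would take expectations and use the independence of $A$ and $A'$ to split the joint probability: for each fixed $i \in \{1,\ldots,m\}$,
\begin{equation}
\mathbb{E}\bigl[\Indicator(i \in A)\,\Indicator(i \in A')\bigr] = \mathbb{P}(i \in A)\,\mathbb{P}(i \in A') = \left(\frac{a}{m}\right)^{2},
\end{equation}
since by symmetry a uniformly chosen $a$-subset of $\{1,\ldots,m\}$ contains any fixed element with probability $a/m$ (which can be verified by the elementary count $\binom{m-1}{a-1}/\binom{m}{a} = a/m$). Summing over $i=1,\ldots,m$ gives $\mathbb{E}[|A \cap A'|] = m \cdot (a/m)^2 = a^2/m$, which is exactly the claimed identity once one recognizes that $\frac{1}{\binom{m}{a}^2}\sum_{A,A'}|A\cap A'|$ is precisely this expectation.

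There is no real obstacle here; the statement is a routine combinatorial identity, and the only thing to watch is that the formula is valid for all $a \in \{1,\ldots,m\}$ (if $a=0$ both sides are zero, and for $a \geq 1$ the single-coordinate probability $a/m$ is well defined). No deeper probabilistic or analytic machinery is needed, and the argument requires nothing beyond linearity of expectation and the independence hypothesis.
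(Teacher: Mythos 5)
Your argument is correct and coincides with the paper's own proof: both decompose $|A \cap A'|$ as $\sum_{i=1}^m \Indicator(i \in A)\Indicator(i \in A')$, apply linearity of expectation and independence, and use $\mathbb{P}(i \in A) = a/m$ to conclude. No meaningful difference in approach.
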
 
\begin{proof}
We have
\begin{align}
\mathbb{E}_{A,A'}[|A \cap A'|]
&= \sum_{i \in \{1,2,\dots,m\}}\mathbb{E}[\Indicator(i \in A \cap A') ]
= \sum_{i \in \{1,2,\dots,m\}}\mathbb{P}(i \in A) \mathbb{P}(i \in A')
= m \cdot \frac{a}{m} \cdot \frac{a}{m} = \frac{a^2}{m}.
\qedhere
\end{align}
\end{proof}

\begin{lemma} \label{lmm:diff_product}
Let $(S_0,S_1)$ and $(S_0',S_1')$ be two independent subsamples
from the honest forest construction.
Then, we have
$$
\frac{1}{{n \choose s/2}^2{n-s/2 \choose s/2}^2}
\sum_{S_0,S_1}\sum_{S_0',S_1'}|S_1'\cap S_0||S_1\cap S_0'|
 \leq \frac{s^4}{16n(n-s/2)}.
$$
\end{lemma}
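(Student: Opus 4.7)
The plan is to recognize the normalized sum as an expectation and exploit the product structure. Specifically, the normalization factor $1/\bigl(\binom{n}{s/2}^2\binom{n-s/2}{s/2}^2\bigr)$ turns the double sum into $\mathbb{E}[\,|S_1'\cap S_0|\,|S_1\cap S_0'|\,]$, where $(S_0,S_1)$ and $(S_0',S_1')$ are independent uniform samples from $\mathcal{S}$. I would then expand both intersection sizes via indicators,
\begin{equation}
|S_1'\cap S_0|\,|S_1\cap S_0'| = \sum_{i,j}\Indicator(i\in S_0)\Indicator(i\in S_1')\Indicator(j\in S_1)\Indicator(j\in S_0'),
\end{equation}
and note that, because $S_0\cap S_1=\emptyset$, the summand vanishes unless $i\neq j$.

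Next, I would use the independence between $(S_0,S_1)$ and $(S_0',S_1')$ to factor the joint probability as
\begin{equation}
\mathbb{P}(i\in S_0,\,j\in S_1)\cdot\mathbb{P}(i\in S_1',\,j\in S_0'),
\end{equation}
and then exploit the symmetry of the honest sampling mechanism (the roles of $S_0$ and $S_1$ are exchangeable, as are the two subsamples) to observe that both factors equal the same quantity $p_{i,j}:=\mathbb{P}(i\in S_0,\,j\in S_1)$ for $i\neq j$. A direct calculation then gives this probability: $\mathbb{P}(i\in S_0)=s/(2n)$ by symmetry, and conditional on $i\in S_0$, the chance that $j\notin S_0$ is $(n-s/2)/(n-1)$, after which $j\in S_1$ with conditional probability $(s/2)/(n-s/2)$, yielding
\begin{equation}
p_{i,j}=\frac{s/2}{n}\cdot\frac{s/2}{n-1}=\frac{s^2}{4n(n-1)}, \qquad i\neq j.
\end{equation}

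Summing $p_{i,j}^2$ over the $n(n-1)$ ordered pairs with $i\neq j$ telescopes one factor of $n(n-1)$ out of the denominator, giving
\begin{equation}
\mathbb{E}\bigl[\,|S_1'\cap S_0|\,|S_1\cap S_0'|\,\bigr] = n(n-1)\cdot \frac{s^4}{16\,n^2(n-1)^2} = \frac{s^4}{16\,n(n-1)}.
\end{equation}
Finally, since $s\geq 2$ implies $n-1\geq n-s/2$, we obtain the stated bound $s^4/[16n(n-s/2)]$. I do not anticipate a serious obstacle: the only non-bookkeeping step is the computation of $p_{i,j}$, and one must be careful that after expanding indicators the only surviving terms correspond to distinct $i,j$ (which is exactly why the factor $n(n-1)$—rather than $n^2$—appears and enables the final comparison with $n-s/2$).
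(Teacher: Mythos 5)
Your argument is correct, and it takes a genuinely different route than the paper. The paper fixes $(S_1', S_0)$, observes that $S_1 \cap S_0'$ is disjoint from $S_1' \cup S_0$, computes $\mathbb{E}[|S_1 \cap S_0'| \mid S_1', S_0] = (n - |S_1' \cup S_0|)(s/2)^2/(n - s/2)^2 \leq s^2/(4(n-s/2))$, and then applies the tower property together with Lemma \ref{lmm:cap} (to get $\mathbb{E}[|S_1' \cap S_0|] = s^2/(4n)$); the bound is a product of two separate estimates, only one of which is tight. You instead expand both intersection counts as sums of indicators, exploit independence of the two subsamples and exchangeability of $S_0$ and $S_1$ within each subsample to reduce every surviving cross term to $p_{i,j}^2$ for distinct $i,j$, compute $p_{i,j}=s^2/(4n(n-1))$ exactly, and obtain the exact value $s^4/(16n(n-1))$, which is in fact slightly sharper than the stated bound and reduces to it because $s\geq 2$. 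The paper's conditioning argument is slightly shorter and reuses machinery (Lemma \ref{lmm:cap}) already invoked for Case 4 of Theorem \ref{thm:forests}; your expansion is more symmetric, yields an exact identity rather than an inequality, and avoids any need for a conditional disjointness observation. One stylistic caution: your chain $\mathbb{P}(j \notin S_0 \mid i \in S_0) \cdot \mathbb{P}(j \in S_1 \mid j \notin S_0,\, i \in S_0)$ is correct but slightly circuitous — since $j \in S_1$ already entails $j \notin S_0$, it is marginally cleaner to write $p_{i,j} = \mathbb{P}(i \in S_0)\,\mathbb{P}(j \in S_1 \mid i \in S_0)$ and evaluate the conditional factor directly as $(s/2)/(n-1)$ via the exchangeability of the $n-1$ remaining indices.
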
 
\begin{proof}
First, assume that $S_1'$ and $S_0$ are fixed. Notice that $S_1 \cap S_0'$ is disjoint from $S_1' \cup S_0$.
Thus, we have 
\begin{equation} \label{eq:cap1}
\begin{aligned}
& \mathbb{E}[|S_1 \cap S_0'| \mid S_1', S_0] \\
&= \sum_{i \not \in S_1' \cup S_0}\mathbb{P}(i \in S_1 \cap S_0' \mid S_1', S_0
)= \sum_{i \not \in S_1' \cup S_0}\mathbb{P}(i \in S_1 \mid S_1',S_0) \mathbb{P}(i \in S_0' \mid S_1',S_0),\\
&= (n-|S_1' \cup S_0|) \Bigg(\frac{s/2}{n-s/2}\Bigg)^2
\leq \frac{s^2}{4(n-s/2)}.
\end{aligned}
\end{equation}
Combining \eqref{eq:cap1} and Lemma \ref{lmm:cap}, we have
\begin{align}
\frac{1}{{n \choose s/2}^2{n-s/2 \choose s/2}^2}
\sum_{S_0,S_1}\sum_{S_0',S_1'}|S_1'\cap S_0||S_1\cap S_0'|
& = \mathbb{E}[|S_1' \cap S_0| \cdot \mathbb{E}[|S_1 \cap S_0'| \mid S_1', S_0]] \\ & \leq  \frac{s^4}{16n(n-s/2)}.
\qedhere
\end{align}
\end{proof}

\begin{lemma}\label{lmm:diff}
Let $(S_0,S_1)$ and $(S_0',S_1')$ be two independent subsamples
from the honest forest construction. Given a fixed $S_1$ and $S_1'$ such that $|S_1 \cap S_1'| \geq 1$, we have
\begin{equation} \label{eq:diff2}
\frac{1}{{n-s/2 \choose s/2}}
\sum_{S_0}\frac{1}{|S_1'\backslash S_0|} - \frac{2}{s}=
\frac{1}{{n-s/2 \choose s/2}}
\sum_{S_0'}\frac{1}{|S_1\backslash S_0'|} - \frac{2}{s} \leq \frac{2n}{s(n-s+2)}.
\end{equation}
Furthermore,
\begin{equation} \label{eq:cond_indep}
\frac{1}{{n-s/2 \choose s/2}^2}
\Bigg(\sum_{S_0}\frac{1}{|S_1'\backslash S_0|} - \frac{2}{s} \Bigg)\Bigg(
\sum_{S_0'}\frac{1}{|S_1\backslash S_0'|} - \frac{2}{s}\Bigg) \leq \frac{4n^2}{s^2(n-s+2)^2}.
\end{equation}
\end{lemma}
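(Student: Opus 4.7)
My plan is to recognize that, for fixed $S_1$ and $S_1'$, the quantity $|S_1'\setminus S_0|$ has a hypergeometric distribution, and then deploy a Vandermonde-based identity that plays the role of Lemma~\ref{lmm:bin} for the hypergeometric. Writing $a := |S_1 \cap S_1'|$ and using $S_0 \cap S_1 = \emptyset$, I reinterpret
\[
Z := |S_1'\setminus S_0| = |S_0 \cap (\{1,\dots,n\}\setminus(S_1\cup S_1'))|,
\]
so $Z$ counts the number of ``successes'' when a uniform $k := s/2$ subset of the $N := n-s/2$ element set $\{1,\dots,n\}\setminus S_1$ is drawn from a universe containing $K := n-s+a$ successes. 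Because the distribution of $Z$ depends on $(S_1,S_1')$ only through $a$, which is symmetric, the first equality in \eqref{eq:diff2} is immediate by swapping the roles of $(S_0,S_1)\leftrightarrow(S_0',S_1')$.

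For the inequality, the crucial deterministic observation is $Z \geq a \geq 1$: the $a$ elements of $S_1 \cap S_1'$ are automatically disjoint from $S_0$, and hence lie in $S_1'\setminus S_0$. Applying the identity $\binom{K}{z}/(z+1) = \binom{K+1}{z+1}/(K+1)$ followed by Vandermonde's convolution yields
\[
\mathbb{E}\!\left[\frac{1}{Z+1}\right] = \frac{\binom{N+1}{k+1}-\binom{N-K}{k+1}}{(K+1)\binom{N}{k}} = \frac{n-s/2+1}{(s/2+1)(n-s+a+1)},
\]
where the second equality uses $N-K = s/2-a < k+1$ so the trailing binomial vanishes. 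Since $Z \geq 1$ implies $1/Z \leq 2/(Z+1)$, we conclude
\[
\mathbb{E}\!\left[\frac{1}{Z}\right] \leq \frac{2(n-s/2+1)}{(s/2+1)(n-s+a+1)} \leq \frac{2(n-s/2+1)}{(s/2)(n-s+2)} = \frac{2}{s} + \frac{2n}{s(n-s+2)},
\]
using $a \geq 1$, $s/2+1 > s/2$, and a one-line rearrangement for the final equality. This is exactly \eqref{eq:diff2}.

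The inequality \eqref{eq:cond_indep} then follows essentially for free: both parenthesized factors are nonnegative because $|S_1'\setminus S_0| \leq s/2$ deterministically implies $\mathbb{E}[1/|S_1'\setminus S_0|] \geq 2/s$ (and symmetrically for $|S_1\setminus S_0'|$), so the product of the two bounds from \eqref{eq:diff2} gives the stated bound. The main technical obstacle is the hypergeometric identity for $\mathbb{E}[1/(Z+1)]$; it is notable that the gap between $1/(s/2+1)$ and $1/(s/2)$ precisely absorbs the slack in $1/Z \leq 2/(Z+1)$, so one never needs to evaluate $\mathbb{E}[1/Z]$ exactly.
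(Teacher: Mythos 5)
Your proof is correct and follows essentially the same path as the paper's: both identify a hypergeometric distribution, apply a Vandermonde-based binomial identity to evaluate (or bound) the absorbed expectation, exploit the deterministic lower bound $|S_1'\setminus S_0|\geq|S_1\cap S_1'|\geq 1$ to pass from $1/Z$ to $2/(Z+1)$, and use symmetry of $|S_1\cap S_1'|$ for the first equality. One small improvement in your write-up is that you make explicit the nonnegativity of both parenthesized factors before multiplying the two bounds from \eqref{eq:diff2}, a step the paper invokes only implicitly via conditional independence.
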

\begin{proof}
Fix $S_1$ and $S_1'$ and note that $\mathbb{P}(|S_1 \cap S_0'| = k \mid S_1, S_1') = \frac{{s/2 - |S_1 \cap S_1'| \choose k}{n-s+|S_1 \cap S_1'|\choose s/2 - k} }{{n-s/2 \choose s/2}}$. Then,
\begin{equation} \label{eq:conditional}
\begin{aligned}
\frac{1}{{n-s/2 \choose s/2}}
\sum_{S_0'}\frac{1}{|S_1\backslash S_0'| }
&= \sum_{k=0}^{s/2-|S_1 \cap S_1'|} \frac{1}{s/2-k} \mathbb{P}(|S_1 \cap S_0'| = k \mid S_1, S_1') \\
&\leq \sum_{k=0}^{s/2-|S_1 \cap S_1'|} \frac{2}{s/2-k+1} \frac{{s/2 - |S_1 \cap S_1'| \choose k}{n-s+|S_1 \cap S_1'|\choose s/2 - k} }{{n-s/2 \choose s/2}} \\
&\leq \frac{2(n-s/2+1)}{(n-s+|S_1 \cap S_1'|+1)(s/2+1)}\sum_{k=0}^{s/2-|S_1 \cap S_1'|} \frac{{s/2 - |S_1 \cap S_1'| \choose k}{n-s+|S_1 \cap S_1'|+1 \choose s/2 - k + 1} }{{n-s/2 + 1 \choose s/2+1}} \\
&\leq \frac{4(n-s/2+1)}{s(n-s+2)},
\end{aligned}
\end{equation}
which implies that \eqref{eq:diff2} holds regardless of $(S_1, S_1')$. This implies \eqref{eq:cond_indep}, since $S_1\backslash S_0'$ is conditionally independent of $S_1' \backslash S_0$ given $(S_1, S_1')$.
\end{proof}

\begin{proof}[Proof of Theorem \ref{thm:forests}]

We use the notation $(\hat s(M,S_0),\hat \jmath(M,S_0))$ to denote the split point and direction, respectively, for a given pair $(M,S_0)$.
First, notice that 
\begin{equation} \label{eq:mu_mse}
\begin{aligned}
\mathbb{E}\big[(\hat \mu(\bx))^2\big]
& = \frac{1}{\binom{p}{m}^2 \binom{n}{s/2}^2 \binom{n-s/2}{s/2}^2}\sum_{M, M'} \sum_{S, S'}\mathbb{E}[\hat\mu(T(M, S))(\bx)\hat\mu(T(M',S'))(\bx)] \\
&= \frac{1}{\binom{p}{m}^2 \binom{n}{s/2}^2 \binom{n-s/2}{s/2}^2} \sum_{M, M'} \sum_{S, S'}\sum_{j \in M \atop j' \in M'}\sum_{i \in S_1 \atop i' \in S_1'} \mathbb{E}[LL' + LR' + RL' + RR'],
\end{aligned}
\end{equation}
where
\begin{align}
& L = 
\frac{y_i\Indicator(\hat\jmath(M, S_0) =j) \Indicator(x_{ij} \leq \hat s(M,S_0)) \Indicator(x_j \leq  \hat s(M,S_0))}{1+\#\{k \in S_1\backslash \{i\}: x_{kj} \leq \hat s(M,S_0)\}}
,\\
&L'  = 
\frac{y_{i'}\Indicator(\hat \jmath(M',S_0') = j')\Indicator(x_{i'j'} \leq \hat s(M',S_0')) \Indicator(x_{j'} \leq  \hat s(M',S_0'))}{1+\#\{k' \in S_1'\backslash \{i'\}: x_{k'j'} \leq \hat s(M',S_0')\}}
,\\
& R = 
\frac{ y_i\Indicator(\hat\jmath(M, S_0)=j) \Indicator(x_{ij} \geq \hat s(M,S_0)) \Indicator(x_j >  \hat s(M,S_0))}{1+\#\{k \in S_1\backslash \{i\}: x_{kj} > \hat s(M,S_0)\}}
, \; \text{and} \; \\
&R'  =  
\frac{y_{i'} \Indicator(\hat \jmath(M',S_0') = j')\Indicator(x_{i'j'} > \hat s(M',S_0')) \Indicator(x_{j'} \geq  \hat s(M',S_0'))}{1+\#\{k' \in S_1'\backslash \{i'\}: x_{k'j'} > \hat s(M',S_0')\}}.
\end{align}
We evaluate \eqref{eq:mu_mse} by considering five cases on the indices $(i,\,i',\,j,\,j')$.

\subsubsection{Case 1: $i \in S_1 \backslash S_0'$ and $i \not = i'$}
In this case, $y_i$ is independent of $(\{(\bx_k,y_k): k \in S_0 \cup S_0'\},\, \{ \bx_k: k \in S_1 \cup S_1'\},\, y_{i'})$ and $\mathbb{E}[y_i] = 0$, so
we have that 
$
\mathbb{E}[LL'] = \mathbb{E}[LR'] =\mathbb{E}[RL'] = \mathbb{E}[RR'] = 0.
$

\subsubsection{Case 2: $i' \in S_1' \backslash S_0$ and $i \not = i'$}
As with Case 1, we have that
$\mathbb{E}[LL'] = \mathbb{E}[LR'] =\mathbb{E}[RL'] = \mathbb{E}[RR'] = 0$.

\subsubsection{Case 3: $i \in S_1 \cap S_0'$ and $i' \in S_1' \cap S_0$}
By the Cauchy-Schwartz inequality, we have
\begin{equation} \label{eq:CS}
\begin{aligned}
(\mathbb{E}[LL'])^2
&\leq
\mathbb{E}\Bigg[
\frac{y_i^2\Indicator(\hat \jmath(M,S_0)= j) \Indicator(x_{ij} \leq \hat s(M,S_0))\Indicator(x_j \leq \hat s(M,S_0)) }{(1+\#\{k \in S_1\backslash\{i\}: x_{kj} \leq \hat s(M,S_0)\})^2}\Bigg] \\
& \qquad 
\cdot \mathbb{E}\Bigg[\frac{y_{i'}^2\Indicator(\hat \jmath(M',S_0')= j') \Indicator(x_{i'j'} \leq \hat s(M',S_0'))\Indicator(x_{j'} \leq \hat s(M',S_0'))}{(1+\#\{k' \in S_1'\backslash\{i'\}: x_{k'j} \leq \hat s(M',S_0')\})^2} \Bigg] \\
&\leq
\mathbb{E}\Bigg[
\frac{ \Indicator(x_{ij} \leq \hat s(M,S_0)) \Indicator(\hat\jmath(M,S_0)= j)}{1+\#\{k \in S_1\backslash\{i\}: x_{kj} \leq \hat s(M,S_0)\}}\Bigg]
\cdot \mathbb{E}\Bigg[\frac{ \Indicator(x_{i'j'} \leq \hat s(M',S_0')) \Indicator(\hat \jmath(M',S_0')= j') }{1+\#\{k' \in S_1'\backslash\{i'\}: x_{k'j} \leq \hat s(M',S_0')\}} \Bigg],
\end{aligned}
\end{equation}
where we used the fact that $y_i$ is independent of $(\{\bx_{k'}: k' \in S_1\}, \,\hat s(M,S_0), \,\hat \jmath(M,S_0))$ and $y_{i'}$ is independent of $(\{\bx_{k'}: k' \in S_1'\}, \,\hat s(M',S_0'),\, \hat \jmath(M',S_0'))$.
Now, since
$(\{x_{kj}: k \in S_1\}, \,\hat \jmath(M,S_0))$ is independent of $\hat s(M,S_0)$ and $(\{x_{k'j'}: k' \in S_1'\},\, \hat \jmath(M',S_0'))$ is independent of $\hat s(M',S_0')$, by applying Lemma \ref{lmm:bin} to \eqref{eq:CS}, we have
\begin{equation} \label{eq:LL}
\begin{aligned}
\mathbb{E}[L L']
&\leq
\frac{2}{s}\sqrt{\mathbb{P}(\hat \jmath(M,S_0)= j)\mathbb{P}(\hat \jmath(M',S_0')= j')}.
\end{aligned}
\end{equation}
By symmetry,
we have that
\begin{equation}
\begin{aligned}
\mathbb{E}[L L' + LR' + RL' + RR']
\leq \frac{8}{s}\sqrt{\mathbb{P}(\hat \jmath(M,S_0)= j)\mathbb{P}(\hat \jmath(M',S_0')= j')}.
\end{aligned}
\end{equation}
Therefore, by the Cauchy-Schwarz inequality,
\begin{equation}
\begin{aligned}
& \sum_{j \in M \atop j' \in M'}
\sum_{i \in S_1 \cap S_0' \atop i' \in S_1' \cap S_0} \mathbb{E}[L L' + LR' + RL' + RR'] \\ &\leq 
\frac{8|S_1 \cap S_0'| |S_1' \cap S_0|}{s}
\sum_{j \in M \atop j' \in M'}
\sqrt{\mathbb{P}(\hat \jmath(M,S_0)= j)\mathbb{P}(\hat \jmath(M',S_0')= j')} \\
&\leq \frac{8|S_1 \cap S_0'| |S_1' \cap S_0|}{s}
\sqrt{\sum_{j \in M}
\mathbb{P}(\hat \jmath(M,S_0)= j) \sum_{j' \in M'} \mathbb{P}(\hat \jmath(M',S_0')= j')} \\
&= \frac{8|S_1 \cap S_0'| |S_1' \cap S_0|}{s},
\end{aligned}
\end{equation}
so that, by Lemma \ref{lmm:diff_product}, we have
\begin{equation}\label{eq:inoti'}
\begin{aligned}
\frac{1}{\binom{p}{m}^2\binom{n}{s/2}^2\binom{n-s/2}{s/2}^2}\sum_{M,M'}\sum_{S,S'} \sum_{j \in M \atop j' \in M'}
\sum_{i \in S_1 \cap S_0' \atop i' \in S_1' \cap S_0} \mathbb{E}[L L' + LR' + RL' + RR'] 
\leq\frac{s^3 }{2n(n-s/2)}.
\end{aligned}
\end{equation}

\subsubsection{Case 4: $j = j' \in M \cap M'$ and $ i=i'$}
In this case, $i \in S_1 \cap S_1'$ is not in $S_0$ or $S_0'$ so $y_i y_{i'} = y^2_i$ is independent of $(\{\bx_{k}: k \in S_1\}, \, \hat s(M,S_0),\, \hat \jmath(M,S_0),\,\hat s(M',S_0'))$ and $\mathbb{E}[y^2_i] = 1$. Therefore,
\begin{align}
\mathbb{E}[LL'] &\leq
\mathbb{E}\Bigg[\frac{\Indicator(\hat \jmath(M,S_0)= j) \Indicator(x_{ij} \leq \hat s(M,S_0)) \Indicator(x_j \leq \hat s(M,S_0))\Indicator(x_{j} \leq \hat s(M',S_0'))}{1+\#\{k \in S_1\backslash\{i\}: x_{kj} \leq \hat s(M,S_0)\}}\Bigg] \\
&\leq \frac{\mathbb{P}(\hat \jmath(M,S_0) = j,\,x_j \leq \hat s(M,S_0), \; \text{and} \; x_j \leq \hat s(M',S_0'))}{s/2},
\end{align}
where we similarly applied Lemma \ref{lmm:bin}. By symmetry, we have
\begin{equation}
\begin{aligned}
& \sum_{j = j' \in M \cap M'}\, \sum_{
i= i' \in S_1\cap S_1'}\mathbb{E}[LL'+LR'+RL'
+RR'] \\ &
\leq \sum_{j = j' \in M \cap M'}\,\sum_{i=i' \in S_1 \cap S_1'}\frac{\mathbb{P}(\hat \jmath(M,S_0) = j)}{s/2}
\leq \frac{2|S_1 \cap S_1'||M \cap M'|}{sm}.
\end{aligned}
\end{equation}

Applying Lemma \ref{lmm:cap} twice, we see that
\begin{equation}
\begin{aligned}
\frac{1}{\binom{p}{m}^2\binom{n}{s/2}^2\binom{n-s/2}{s/2}^2}\sum_{M,M'}\,\sum_{S,S'} \,\sum_{j = j' \in M \cap M'}\,\sum_{i=i' \in S_1 \cap S_1'}\mathbb{E}[LL'+LR'+RL'+RR']
= \frac{sm}{2np}.
\end{aligned}
\end{equation}

\subsubsection{Case 5: $j \not = j'$ and $ i=i'$}
If $ j \not \in M'$, then $\#\{x_{kj}: k \in S_1 \backslash\{i\}\}$ is independent of $(y_i, \,\hat s(M,S_0), \, \{\hat \jmath(M,S_0) = j\}, \, L', \, y_i)$. Otherwise $\#\{x_{kj}: k \in S_1 \backslash\{S_0' \cup i\}\}$ (which is less than $\#\{x_{kj}: k \in S_1\backslash \{i\} \}$) is independent of $(\hat s(M,S_0), \, \{\hat \jmath(M,S_0) = j\}, \, L')$. Therefore, by applying Lemma \ref{lmm:bin}, we have
\begin{equation}
\begin{aligned}
& \mathbb{E}[L \mid y_i, \, \hat s(M,S_0), \, \{\hat \jmath(M,S_0) = j\},\, L'] \\ & \leq 
y_i \Indicator(\hat \jmath(M,S_0) = j) \Bigg(
\frac{\Indicator(j \not \in M')}{s/2}+\frac{\Indicator(j \in M')}{|S_1 \backslash S_0'|}\Bigg)\Indicator(x_j \leq \hat s(M,S_0)).
\end{aligned}
\end{equation}
Similarly, we also have 
\begin{equation}
\begin{aligned}
& \mathbb{E}[L'\mid y_i, \, \{\hat \jmath(M,S_0) = j\}, \, \hat s(M',S_0'), \, \{\hat \jmath(M',S_0') = j'\}]
 \\ &\leq 
y_{i} \Indicator(\hat \jmath(M',S_0') = j') \Bigg(
\frac{\Indicator(j' \not \in M)}{s/2}+\frac{\Indicator(j' \in M)}{|S_1' \backslash S_0|}\Bigg)
\Indicator(x_{j'} \leq \hat s(M',S_0')).
\end{aligned}
\end{equation}
Therefore, we have
\begin{align}
\mathbb{E}[LL']
&\leq 
\mathbb{P}(\hat\jmath(M, S_0) =j, \, \hat \jmath(M',S_0') = j',\, x_j \leq \hat s(M, S_0), \; \text{and} \;x_{j'} \leq \hat s(M', S_0')) \\
&\qquad\qquad\qquad\qquad\qquad\qquad \cdot \Bigg(\frac{\Indicator(j \not \in M')}{s/2}+\frac{\Indicator(j \in M')}{|S_1 \backslash S_0'|}\Bigg)\Bigg(\frac{\Indicator(j' \not \in M)}{s/2}+\frac{\Indicator(j' \in M)}{|S_1 \backslash S_0'|}\Bigg),
\end{align}
where we used the fact that
$y_i^2$  is independent of the data indices in $S_0 \cup S_0'$, for $i = i' \in S_1 \cap S_1'$,
and
$\mathbb{E}[y_i^2]=1$.
By symmetry, we have
\begin{equation} \label{eq:upper}
\begin{aligned}
&\sum_{j \in M \atop j \in M'}\sum_{i \in S_1 \cap S_1'} \mathbb{E}[LL'+LR'+RL'+RR']\\
&\leq \sum_{j \in M \atop j \in M'}\sum_{i \in S_1 \cap S_1'} \mathbb{P}(\hat\jmath(M, S_0) =j, \, \hat \jmath(M',S_0') = j') \Bigg(\frac{\Indicator(j \not \in M')}{s/2}+\frac{\Indicator(j \in M')}{|S_1 \backslash S_0'|}\Bigg)\Bigg(\frac{\Indicator(j' \not \in M)}{s/2}+\frac{\Indicator(j' \in M)}{|S_1 \backslash S_0'|}\Bigg)\\
&\leq \frac{|S_1 \cap S_1'|}{m^2} \Bigg( \frac{m - |M \cap M'|}{s/2} +  \frac{|M \cap M'|}{|S_1\backslash S_0'|}\Bigg)\Bigg( \frac{m - |M \cap M'|}{s/2} +  \frac{|M \cap M'|}{|S_1'\backslash S_0|}\Bigg)\\
&\leq |S_1 \cap S_1'| \Bigg(\frac{4}{s^2}+\frac{2|M \cap M'|}{sm}\Bigg(\frac{1}{|S_1\backslash S_0'|}+ \frac{1}{|S_1'\backslash S_0|} - \frac{4}{s}\Bigg) \\ & \qquad
+\frac{|M \cap M'|}{m} \Bigg(\frac{1}{|S_1\backslash S_0'|}-\frac{2}{s}\Bigg)\Bigg(\frac{1}{|S_1'\backslash S_0|} - \frac{2}{s} \Bigg)\Bigg).
\end{aligned}
\end{equation}
Since $i \in S_1 \cap S_1'$, we have $|S_1 \cap S_1'|\geq 1$, so by \eqref{eq:upper} and Lemma \ref{lmm:diff},
we have
\begin{equation}\label{eq:jnotj'}
\begin{aligned}
& \frac{\sum_{M,M'}\sum_{S,S'} \sum_{j \not = j'}\sum_{i=i'}\mathbb{E}[LL'+LR'+RL'+RR']}{{p \choose m}^2{n \choose s/2}^2{n-s/2 \choose s/2}^2}
 \\ &\leq \frac{\sum_{M,M'}\sum_{S_1,S_1'} |S_1 \cap S_1'|\Big(\frac{4}{s^2}+ \frac{8n|M \cap M'|}{s^2(n-s+2)m} + \frac{4n^2|M \cap M'|}{s^2(n-s+2)^2m}\Big) }{{p \choose m}^2{n \choose s/2}^2} \\
&\leq \frac{1}{n} + \frac{2m}{(n-s+2)p} +  \frac{nm}{(n-s+2)^2p}\\
&\leq \frac{1}{n}\Bigg(1 + \frac{3m}{p} \Bigg(\frac{n}{n-s+2}\Bigg)^2\Bigg),
\end{aligned}
\end{equation}
where we applied Lemma \ref{lmm:cap} in the second inequality.
Combining Cases 1-5, we have thus shown that
\begin{align} \label{eq:LL}
\mathbb{E}\big[ (\hat \mu(\bx))^2\big]
& \leq 
\frac{1}{n}\Bigg(1 + \frac{sm}{2p} + \frac{3m}{p} \Bigg(\frac{n}{n-s+2}\Bigg)^2 + \frac{s^3}{2(n-s/2)}\Bigg).
 \qedhere
\end{align}
\end{proof}

\bibliographystyle{plainnat}
\bibliography{CKT_2024_CART-Inconsistent--bib}

\end{document}